\providecommand{\U}[1]{\protect \rule{.1in}{.1in}}
\newtheorem{theorem}{Theorem}
\newtheorem{corollary}{Corollary}
\newenvironment{proof}[1][Proof]{\textbf{#1.} }{\  \rule{0.5em}{0.5em}}
\begin{document}

\title{A New Computationally Simple Approach for Implementing Neural Networks with
Output Hard Constraints}
\author{Andrei V. Konstantinov and Lev V. Utkin\\Peter the Great St.Petersburg Polytechnic University\\St.Petersburg, Russia\\e-mail: andrue.konst@gmail.com, lev.utkin@gmail.com}
\date{}
\maketitle

\begin{abstract}
A new computationally simple method of imposing hard convex constraints on the
neural network output values is proposed. The key idea behind the method is to
map a vector of hidden parameters of the network to a point that is guaranteed
to be inside the feasible set defined by a set of constraints. The mapping is
implemented by the additional neural network layer with constraints for
output. The proposed method is simply extended to the case when constraints
are imposed not only on the output vectors, but also on joint constraints
depending on inputs. The projection approach to imposing constraints on
outputs can simply be implemented in the framework of the proposed method. It
is shown how to incorporate different types of constraints into the proposed
method, including linear and quadratic constraints, equality constraints, and
dynamic constraints, constraints in the form of boundaries. An important
feature of the method is its computational simplicity. Complexities of the
forward pass of the proposed neural network layer by linear and quadratic
constraints are $O(nm)$ and $O(n^{2}m)$, respectively, where $n$ is the number
of variables, $m$ is the number of constraints. Numerical experiments
illustrate the method by solving optimization and classification problems. The
code implementing the method is publicly available.

\textit{Keywords}: neural network, hard constraints, convex set, projection
model, optimization problem, classification

\end{abstract}

\section{Introduction}

Neural networks can be regarded as an important and effective tool for solving
various machine learning tasks. A lot of tasks require to constrain the output
of a neural network, i.e. to ensure the output of the neural network satisfies
specified constraints. Examples of tasks, which restrict the network output,
are neural optimization solvers with constraints, models generating images or
parts of images in a predefined region, neural networks solving the control
tasks with control actions in a certain interval, etc.

The most common approach to restrict the network output space is to add some
extra penalty terms to the loss function to penalize constraint violations.
This approach leads to the so-called \emph{soft} constraints or soft
boundaries. It does not guarantee that the constraints will be satisfied in
practice when a new example feeds into the neural network. This is because the
output falling outside the constraints is only penalized, but not eliminated
\cite{Marquez_Neila-etal-17}. Another approach is to modify the neural network
such that it strongly predicts within the constrained output space. In this
case, the constraints are \emph{hard} in the sense that they are satisfied for
any input example during training and inference \cite{Frerix-etal-20}.

Although many applications require the hard constraints, there are not many
models that actually realize them. Moreover, most available models are based
on applying the soft constraints due to their simple implementation by means
of penalty terms in loss functions. In particular, Lee et al.
\cite{Lee-Mehta-etal-19} present a method for neural networks that enforces
deterministic constraints on outputs, which actually cannot be viewed as hard
constraints because they are substituted into the loss function.

An approach to solving problems with conical constraints of the form $Ax\leq0$
is proposed in \cite{Frerix-etal-20}. The model generates points in a feasible
set using a predefined set of rays. A serious limitation of this method is the
need to search for the corresponding rays. If to apply this approach not only
to conical constraints, then we need to look for all vertices of the set.
However, the number of vertices may be extremely large. Moreover, authors of
\cite{Frerix-etal-20} claim that the most general setting does not allow for
efficient incorporation of domain constraints.

A general framework for solving constrained optimization problems called DC3
is described in \cite{Donti-etal-21}. It aims to incorporate (potentially
non-convex) equality and inequality constraints into the deep learning-based
optimization algorithms. The DC3 method is specifically designed for
optimization problems with hard constraints. Its performance heavily relies on
the training process and the chosen model architecture.

A scalable neural network architecture which constrains the output space is
proposed in \cite{Brosowsky-etal-21}. It is called ConstraintNet and applies
an input-dependent parametrization of the constrained output space in the
final layer. Two limitations of the method can be pointed out. First,
constraints in ConstraintNet are linear. Second, the approach also uses all
vertices of the constrained output space, whose number may be large.

A differentiable block for solving quadratic optimization problems with linear
constraints, as an element of a neural network, was proposed in
\cite{Amos-Kolter-17}. For a given optimization problem with a convex loss
function and a set of linear constraints, the optimization layer allows
finding a solution during the forward pass, and finding derivatives with
respect to parameters of the loss function and constraints during the
backpropagation. A similar approach, which embeds an optimization layer into a
neural network avoiding the need to differentiate through optimization steps,
is proposed in \cite{Agrawal-etal-19}. In contrast to \cite{Amos-Kolter-17},
the method called the Discipline Convex Programming is extended to the case of
arbitrary convex loss function including its parameters. According to the
Discipline Convex Programming \cite{Agrawal-etal-19}, a projection operator
can be implemented by using a differentiable optimization layer that
guarantees that the output of the neural network satisfies constraints.
However, the above approaches require solving convex optimization problems for
each forward pass.

Another method for solving optimization problem with linear constraints is
represented in \cite{MeiyiLi-etal-23}. It should be noted that the method may
require significant computational resources and time to solve complex
optimization problems. Moreover, it solves the optimization problems only with
linear constraints.

Several approaches for solving the constrained optimization problems have been
proposed in
\cite{Balestriero-LeCun-23,Chen-Huang-Zhang-etal-21,Detassis-etal-21,Hendriks-etal-20,Negiar-etal-23,Tejaswi-Lee-22}%
. An analysis of the approaches can be found in the survey papers
\cite{Kotary-etal-21,Kotary-etal-21a}.

To the best of our knowledge, at the moment, no approach is known that allows
building layers of neural networks, the output of which satisfies linear and
quadratic constraints, without solving the optimization problem during the
forward pass of the neural network. Therefore, we present a new
computationally simple method of the neural approximation which imposes hard
linear and quadratic constraints on the neural network output values. The key
idea behind the method is to map a vector of hidden parameters to a point that
is guaranteed to be inside the feasible set defined by a set of constraints.
The mapping is implemented by the additional neural network layer with
constraints for output. The proposed method is simply extended to the case
when constraints are imposed not only on the output vectors, but also on joint
constraints depending on inputs. Another peculiarity of the method is that the
projection approach to imposing constraints on outputs can simply be
implemented in the framework of the proposed method.

An important feature of the proposed method is its computational simplicity.
For example, the computational complexity of the forward pass of the neural
network layer implementing the method in the case of linear constraints is
$O(nm)$ and in the case of quadratic constraints is $O(n^{2}m)$, where $n$ is
the number of variables, $m$ is the number of constraints.

The proposed method can be applied to various applications. First of all, it
can be applied to solving optimization problems with arbitrary differentiable
loss functions and with linear and quadratic constraints. The method can be
applied to implement generative models with constraints. It can be used when
constraints are imposed on a predefined points or a subsets of points. There
are many other applications where the input and output of neural networks are
constrained. The proposed method allows solving the corresponding problems
incorporating the inputs as well as outputs imposed by the constraints.

Our contributions can be summarized as follows:

\begin{enumerate}
\item A new computationally simple method of the neural approximation which
imposes hard linear and quadratic constraints on the neural network output
values is proposed.

\item The implementation of the method by different types of constraints,
including linear and quadratic constraints, equality constraints, constraints
imposed on inputs and outputs are considered.

\item Different modifications of the proposed method are studied, including
the model for obtaining solutions at boundaries of a feasible set and the
projection models.

\item Numerical experiments illustrating the proposed method are provided. In
particular, the method is illustrated by considering various optimization
problems and a classification problem.
\end{enumerate}

The corresponding code implementing the proposed method is publicly available
at: 

https://github.com/andruekonst/ConstraiNet/.

The paper is organized as follows. The problem of constructing a neural
network imposing hard constraints on the network output value is stated in
Section 2. The proposed method solving the stated problem and its
modifications is considered in Section 3. Numerical experiments are given in
Section 4. Conclusion can be found in Section 5.

\section{The problem statement}

Formally, let $z\in \mathbb{R}^{d}$ denote the input data for a neural network,
and $x\in \mathbb{R}^{n}$ denote the output (prediction) of the network. The
neural network can be regarded as a function $f_{\theta}:\mathbb{R}%
^{d}\rightarrow \mathbb{R}^{n}$ such that $x=f_{\theta}(z)$, where $\theta
\in \Theta$ is a vector of trainable parameters.

Let we have a convex feasible set $\Omega \subset \mathbb{R}^{n}$ as the
intersection of a set of constraints in the form of $m$ inequalities:
\begin{equation}
\Omega=\left \{  x~|~h_{i}(x)\leq0,~i=1,...,m\right \}  ,
\label{eq:first_system}%
\end{equation}
where each constraint $h_{i}(x)\leq0$ is convex, i.e. $\forall x^{(1)}%
,x^{(2)}:h(x^{(1)}),h(x^{(2)})\leq0$, there holds
\begin{equation}
\forall \alpha \in \lbrack0,1]~(h(\alpha x^{(1)}+(1-\alpha)x^{(1)})\leq0).
\end{equation}

We aim to construct a neural network with constraints for outputs. In other
words, we aim to construct a model $x=f_{\theta}(z):\mathbb{R}^{d}%
\rightarrow \Omega$ and to impose hard constraints on $x$ such that $x\in
\Omega$ for all $z\in \mathbb{R}^{d}$, i.e.
\begin{equation}
\forall z\in \mathbb{R}^{d}~(f_{\theta}(z)\in \Omega).
\end{equation}

\section{The proposed method}

To construct the neural network with constrained output vector, two
fundamentally different strategies can be applied:

\begin{enumerate}
\item The first strategy is to project into the feasible set $\Omega$. The
strategy is to build a projective differentiable layer $P(y):\mathbb{R}%
^{n}\rightarrow \mathbb{R}^{n}$ such that $\forall y\in \mathbb{R}^{n}%
~(P(y)\in \Omega)$. A difficulty of the approach can arise with optimizing
projected points when they are outside the set $\Omega$. In this case, the
projections will lie on the boundary of the feasible set, but not inside it.
This case may complicate the optimization of the projected points.

\item The second strategy is to map the vector of \emph{hidden parameters} to
a point that is guaranteed to be inside the feasible set $\Omega$. The mapping
$G(\lambda):\mathbb{R}^{k}\rightarrow \mathbb{R}^{n}$ is constructed such that
$\forall \lambda \in \mathbb{R}^{k}~(G(\lambda)\in \Omega)$, where $\lambda$ is
the vector of hidden parameters having the dimensionality $k$. This strategy
does not have the disadvantages of the first strategy.
\end{enumerate}

In spite of the difference between the above strategies, it turns out that the
first strategy can simply be implemented by using the second strategy.
Therefore, we start with a description of the second strategy.

\subsection{The neural network layer with constraints for output}

Let a fixed point $p$ be given inside a convex set $\Omega$, i.e. $p\in \Omega
$. Then an arbitrary point $x$ from the set $\Omega$ can be represented as:
\begin{equation}
x=p+\alpha \cdot r,
\end{equation}
where $\alpha \geq0$ is a scale factor; $r\in \mathbb{R}^{n}$ is a vector (a ray
from the point $p$).\newline

On the other hand, for any $p,r$, there is an upper bound $\overline{\alpha
}_{p,r}$ for the parameter $\alpha$, which is defined as
\begin{equation}
\overline{\alpha}_{p,r}=\max \left \{  \alpha \geq0~|~p+\alpha \cdot r\in
\Omega \right \}  .
\end{equation}

At that, the segment $\left[  p;~p+\overline{\alpha}_{p,r}\cdot r\right]  $
belongs to the set $\Omega$ because $\Omega$ is convex. The meaning of the
upper bound $\overline{\alpha}_{p,r}$ is to determine the point of
intersection of the ray $r$ and one of the constraints.

Let us construct a layer of the neural network which maps the ray $r$ and the
scale factor $\alpha_{p,r}$ as follows:%
\begin{equation}
g_{p}(r,s)=p+\alpha_{p,r}(s)\cdot r, \label{eq:layer_def}%
\end{equation}
where $\alpha_{p,r}(s)$ is a function of the layer parameter $s$ and
$\overline{\alpha}_{p,r}$, which is of the form:
\begin{equation}
\alpha_{p,r}(s)=\sigma(s)\cdot \overline{\alpha}_{p,r},
\end{equation}
$\sigma(s):\mathbb{R}\rightarrow \lbrack0,1]$ is the sigmoid function, that is,
a smooth monotonic function.

Such a layer is guaranteed to fulfill the constraint
\begin{equation}
\forall r\in \mathbb{R}^{n},\ s\in \mathbb{R}~(g_{p}(r,s)\in \Omega).
\end{equation}

This neural network is guaranteed to fulfil the constraints:
\begin{equation}
\forall z\in \mathbb{R}^{d}~(f_{\theta}(z)\in \Omega),
\end{equation}
because the segment $[p,p+\overline{\alpha}_{p,r}\cdot r]$ belongs to $\Omega$.%

\begin{figure}
[ptb]
\begin{center}
\includegraphics[
height=1.985in,
width=2.6375in
]%
{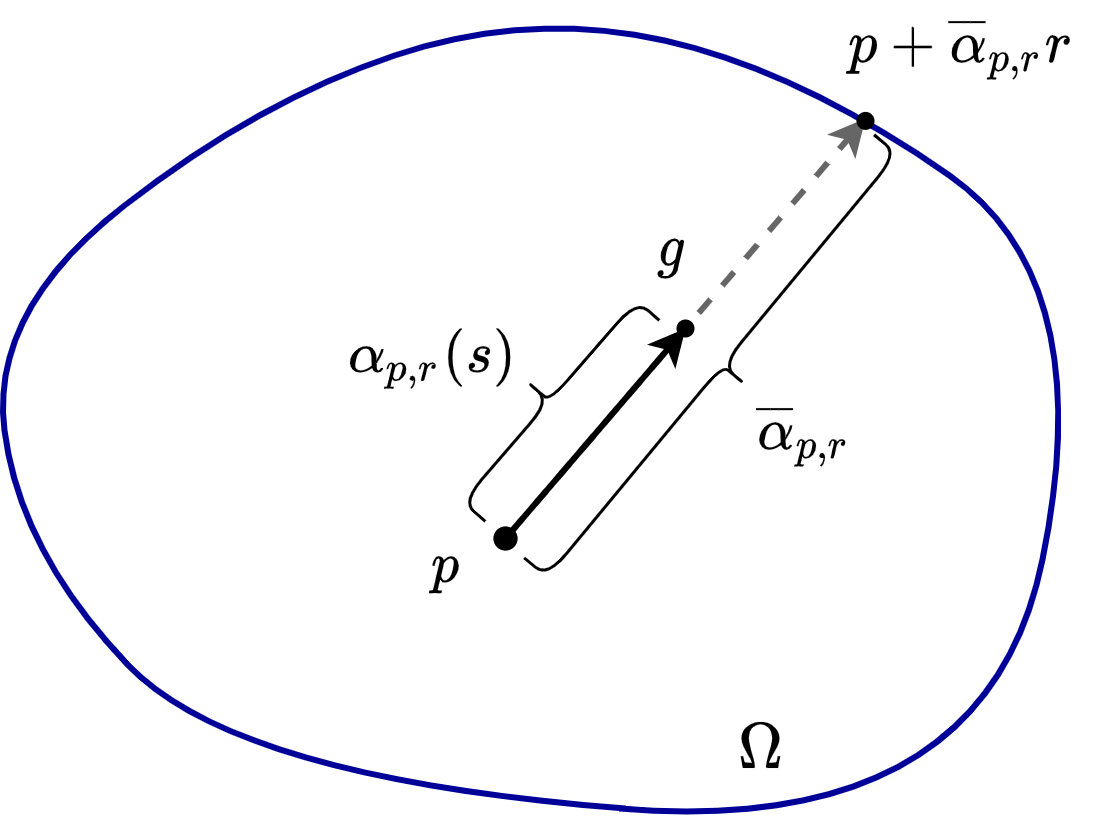}%
\caption{A scheme of the map $g_{p}(r,s)$}%
\label{fig:layer_scheme}%
\end{center}
\end{figure}

A scheme for mapping the ray $r$ and the scalar factor $s$ to a point inside
the set $\Omega$ is shown in Fig.\ref{fig:layer_scheme}. We are searching for
the intersection of the ray $r$, leaving the point $p$, with the boundary of
the set $p+\overline{\alpha}_{p,r}\cdot r$, and then the result of scaling is
the point $g$.%

\begin{figure}
[ptb]
\begin{center}
\includegraphics[
height=2.4168in,
width=3.028in
]%
{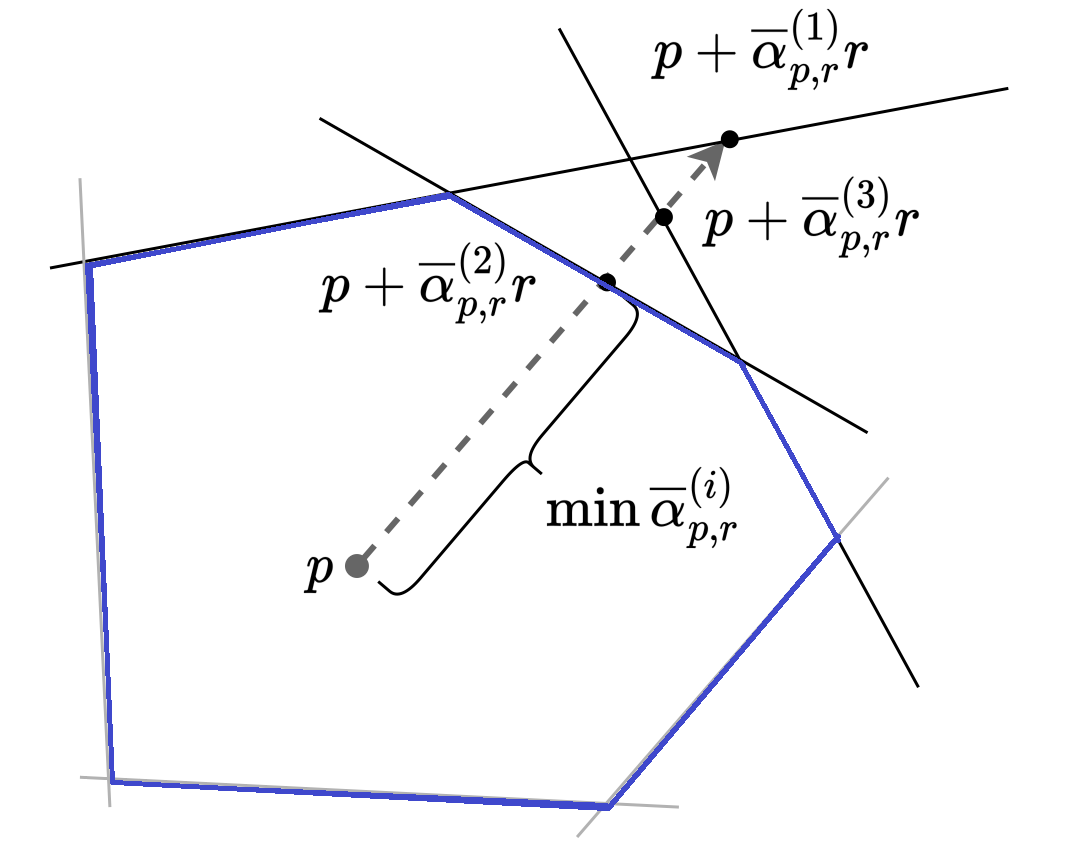}%
\caption{A scheme of searching for the upper bound $\overline{\alpha}$}%
\label{fig:linear_constraints}%
\end{center}
\end{figure}

For the entire systems of constraints, it is sufficient to find the upper
bound $\overline{\alpha}_{p,r}$ that satisfies each of the constraints. Let
$\overline{\alpha}_{p,r}^{(i)}$ be the upper bound for the parameter $\alpha$
corresponding to the $i$-th constraint $(h_{i}(x)\leq0)$ of the system
(\ref{eq:first_system}). Then the upper bound for the entire system of
constraints is determined to satisfy the condition $[p,p+\overline{\alpha
}_{p,r}\cdot r]\subseteq \lbrack p,p+\overline{\alpha}_{p,r}^{(i)}\cdot r]$,
i.e. there holds%
\begin{equation}
\overline{\alpha}_{p,r}=\min \{ \overline{\alpha}_{p,r}^{(i)}\}_{i=1}^{m}.
\end{equation}
A scheme of searching for the upper bound $\overline{\alpha}_{p,r}$, when
linear constraints are used, is depicted in Fig.\ref{fig:linear_constraints}.

Thus, the computational complexity of the forward pass of the described neural
network layer is directly proportional to the number of constraints and of the
computational complexity of intersection procedure with one constraint.

\begin{theorem}
An arbitrary vector $x\in \Omega$ can be represented by means of the layer
$g_{p}(r,s)$. The output of the layer $g_{p}(r,s)$ belongs to the set $\Omega$
for its arbitrary input $(r,s)$.
\end{theorem}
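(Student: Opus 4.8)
The plan is to prove the statement in its two halves: \textbf{(soundness)} for every input $(r,s)$ the output lies in $\Omega$, and \textbf{(coverage)} every $x\in\Omega$ arises as $g_{p}(r,s)$ for some $(r,s)$. The soundness direction is the short one, so I would dispatch it first. Since $\sigma(s)\in[0,1]$, the scale $\alpha_{p,r}(s)=\sigma(s)\,\overline{\alpha}_{p,r}$ lies in $[0,\overline{\alpha}_{p,r}]$, and I would rewrite
\begin{equation}
g_{p}(r,s)=p+\sigma(s)\,\overline{\alpha}_{p,r}\,r=\bigl(1-\sigma(s)\bigr)\,p+\sigma(s)\bigl(p+\overline{\alpha}_{p,r}\,r\bigr),
\end{equation}
exhibiting $g_{p}(r,s)$ as a convex combination of the two endpoints $p$ and $p+\overline{\alpha}_{p,r}\,r$. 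Both endpoints belong to $\Omega$: the first by the hypothesis $p\in\Omega$, the second because $p+\overline{\alpha}_{p,r}\,r$ is precisely the boundary point realizing the maximum in the definition of $\overline{\alpha}_{p,r}$, hence feasible. Convexity of $\Omega$ then closes this direction at once.

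For the coverage direction I would argue constructively. Given $x\in\Omega$, set the ray to $r=x-p$ (the degenerate case $x=p$, where $r=0$, is handled separately). Because $x\in\Omega$, the value $\alpha=1$ is feasible in the definition of $\overline{\alpha}_{p,r}$, whence $\overline{\alpha}_{p,r}\geq1$. I would then choose $s$ so that $\sigma(s)=1/\overline{\alpha}_{p,r}$; for such $s$,
\begin{equation}
g_{p}(r,s)=p+\sigma(s)\,\overline{\alpha}_{p,r}\,(x-p)=p+(x-p)=x.
\end{equation}
Since $\overline{\alpha}_{p,r}\geq1$, the target value $1/\overline{\alpha}_{p,r}$ lies in $(0,1]$, so whenever it is strictly below $1$ the surjectivity of the sigmoid onto $(0,1)$ furnishes the required $s$.

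The step I expect to be the genuine obstacle is pinning down the boundaries of the parameter ranges. The standard sigmoid maps onto the \emph{open} interval $(0,1)$, so the two extreme targets — the center $p$ itself (requiring $\sigma(s)=0$) and any $x$ lying exactly on the boundary of $\Omega$ along its ray (requiring $\sigma(s)=1$, i.e.\ $\overline{\alpha}_{p,r}=1$) — are attained only in the limits $s\to-\infty$ and $s\to+\infty$. I would address this by reading $\sigma:\mathbb{R}\to[0,1]$ with its closed range as the excerpt states, equivalently by passing to the closure and invoking continuity of $g_{p}$ in $s$, so that these two cases are recovered as limiting representations. A second point of care is the well-definedness of $\overline{\alpha}_{p,r}$: since each $h_{i}$ is continuous and convex, $\Omega$ is closed and the feasible set $\{\alpha\geq0:p+\alpha r\in\Omega\}$ is a closed interval $[0,\overline{\alpha}_{p,r}]$, so the maximum is attained whenever $\Omega$ is bounded in the direction $r$. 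The unbounded case ($\overline{\alpha}_{p,r}=+\infty$, which would force $1/\overline{\alpha}_{p,r}=0$ and defeat coverage) and the degenerate ray $r=0$ are the remaining pathologies to quarantine; under the working assumption that $\Omega$ is bounded, neither occurs for the constructive choice $r=x-p$ with $x\neq p$, and the two halves combine to give the claim.
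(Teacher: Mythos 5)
Your proof is correct, and its first half (soundness) coincides with the paper's argument: both exhibit $g_{p}(r,s)$ as a point of the segment $[p,\,p+\overline{\alpha}_{p,r}\,r]\subset\Omega$ and invoke convexity of $\Omega$ together with $p\in\Omega$. In the coverage half you genuinely diverge from the paper, and to your advantage. The paper also sets $r=x-p$ but then sends $s\rightarrow+\infty$, writing $g_{p}(x-p,+\infty)=p+1\cdot(x-p)=x$; this computation silently drops the factor $\overline{\alpha}_{p,(x-p)}$, since $\sigma(+\infty)=1$ actually gives $g_{p}(x-p,+\infty)=p+\overline{\alpha}_{p,(x-p)}\,(x-p)$, which equals $x$ only when $\overline{\alpha}_{p,(x-p)}=1$, i.e.\ only when $x$ is the boundary point of $\Omega$ along that ray; for interior $x$ the paper's choice of $s$ overshoots to the boundary. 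Your choice $\sigma(s)=1/\overline{\alpha}_{p,(x-p)}$, legitimate because $x\in\Omega$ forces $\overline{\alpha}_{p,(x-p)}\geq 1$, is exactly the repair needed, and it recovers the paper's limit $s\rightarrow+\infty$ precisely in the boundary case $\overline{\alpha}_{p,(x-p)}=1$. Your bookkeeping of the remaining pathologies --- the open range $(0,1)$ of the sigmoid, the degenerate ray $r=0$ when $x=p$, and unboundedness of $\Omega$ in the direction $r$ --- is also more explicit than the paper's, which glosses over these and only concedes in its conclusion that the method requires bounded constraint sets. In short: same skeleton, but your coverage argument is the rigorous one, and what it buys is a proof that actually covers interior points, which the paper's own computation, read literally, does not.
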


\begin{proof}
\begin{enumerate}
\item An arbitrary output vector\textbf{ }$g_{p}(r,s)$ satisfies constraints
that is $\forall r\in \mathbb{R}^{n},s\in \mathbb{R}$ there holds $g_{p}%
(r,s)\in \Omega$ because $\alpha_{p,r}(s)\leq \overline{\alpha}_{p,r}^{(i)}$,
and an arbitrary segment $[p,p+\overline{\alpha}_{p,r}^{(i)}\cdot
r]\subset \Omega$. Consequently, there holds $g_{p}(r,s)\in \lbrack
p,p+\alpha_{p,r}(s)]\subset \Omega$.

\item An arbitrary point $x\in \Omega$ can be represented by using the layer
$g_{p}(r,s)$. Indeed, let $r=x-p$, $s\rightarrow+\infty$. Then we can write
$x=g_{p}(x-p,+\infty)=p+1\cdot(x-p)=x$, as was to be proved.
\end{enumerate}
\end{proof}

\begin{corollary}
For rays from the unit sphere $||r||_{2}=1$, an arbitrary point $x\in \Omega
$,$\ x\neq p$ can be \emph{uniquely} represented by using $g_{p}(r,s)$.
\end{corollary}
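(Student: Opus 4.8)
The plan is to establish both existence and uniqueness of the pair $(r,s)$ with $\|r\|_2=1$ representing a given $x\in\Omega$, $x\neq p$, by exploiting the decomposition $g_p(r,s)-p=\sigma(s)\,\overline{\alpha}_{p,r}\,r$ into a nonnegative magnitude $\sigma(s)\,\overline{\alpha}_{p,r}$ and a unit direction $r$.

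First, for existence, I would set $r:=(x-p)/\|x-p\|_2$, which is well defined and has unit norm since $x\neq p$. Because $x\in\Omega$ and $\Omega$ is convex with $p\in\Omega$, the segment $[p,x]$ lies in $\Omega$; hence $\alpha=\|x-p\|_2$ is a feasible value in the maximization defining $\overline{\alpha}_{p,r}$, so $\overline{\alpha}_{p,r}\geq\|x-p\|_2>0$. It then remains to find $s$ with $\sigma(s)\,\overline{\alpha}_{p,r}=\|x-p\|_2$, i.e. $\sigma(s)=\|x-p\|_2/\overline{\alpha}_{p,r}\in(0,1]$. Since $\sigma$ is a strictly increasing bijection from $\mathbb{R}$ onto $(0,1)$, a unique finite $s$ exists whenever the ratio is strictly less than one (the interior case), while the boundary case $\|x-p\|_2=\overline{\alpha}_{p,r}$ is attained only in the limit $s\to+\infty$, exactly as in the existence argument of the theorem.

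Second, for uniqueness, suppose $g_p(r_1,s_1)=g_p(r_2,s_2)=x$ with $\|r_1\|_2=\|r_2\|_2=1$. Writing $\alpha_i:=\sigma(s_i)\,\overline{\alpha}_{p,r_i}\geq0$, I obtain $\alpha_1 r_1=\alpha_2 r_2=x-p\neq0$, so both $\alpha_i>0$. Taking norms and using $\|r_i\|_2=1$ forces $\alpha_1=\alpha_2=:\alpha$, and then $\alpha r_1=\alpha r_2$ with $\alpha>0$ gives $r_1=r_2$. Consequently $\overline{\alpha}_{p,r_1}=\overline{\alpha}_{p,r_2}=:\overline{\alpha}_{p,r}>0$, and from $\sigma(s_1)\,\overline{\alpha}_{p,r}=\sigma(s_2)\,\overline{\alpha}_{p,r}$ I conclude $\sigma(s_1)=\sigma(s_2)$; strict monotonicity of $\sigma$ then yields $s_1=s_2$.

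The routine steps — norm-taking and inverting $\sigma$ — are straightforward; the one delicate point is the treatment of boundary points, where the required $s$ is only $+\infty$. I expect the main obstacle to be stating the uniqueness cleanly in this degenerate case: strictly speaking $g_p$ never outputs a boundary point for finite $s$, so uniqueness there must be read either as uniqueness of the direction $r$ together with the limiting value $s=+\infty$, or the corollary must be understood as applying to points in the relative interior of $\Omega$ reached by finite $s$. I would make this convention explicit so that the claim "uniquely represented" is unambiguous.
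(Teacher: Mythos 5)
Your proof is correct. Note that the paper itself supplies no proof for this corollary: it is stated as an immediate consequence of the theorem, whose own existence argument uses the unnormalized ray $r=x-p$ together with $s\rightarrow+\infty$. Your argument is the natural formalization of what the paper leaves implicit, and in two respects it is actually more careful than the paper's theorem proof. First, the paper writes $x=g_{p}(x-p,+\infty)=p+1\cdot(x-p)$ for \emph{every} $x\in\Omega$, which tacitly assumes $\overline{\alpha}_{p,x-p}=1$; that holds only for boundary points, whereas your step of inverting the sigmoid, $\sigma(s)=\|x-p\|_{2}/\overline{\alpha}_{p,r}\in(0,1]$, correctly assigns a finite $s$ to interior points and reserves the limit $s\rightarrow+\infty$ for the boundary case. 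Second, your uniqueness argument (taking norms to pin down the magnitude, hence the unit direction, hence $\sigma(s)$ by strict monotonicity) is exactly the content that makes the corollary stronger than the theorem, and it is absent from the paper entirely. Your closing caveat is also well taken: since $\sigma$ maps $\mathbb{R}$ onto $(0,1)$, boundary points are attained only in the limit, so the corollary's claim of unique representation should be read over the extended parameter set $s\in\mathbb{R}\cup\{+\infty\}$ (on which $\sigma$ remains injective, so uniqueness survives); the paper adopts this convention silently in its theorem proof, and making it explicit, as you do, removes the only ambiguity in the statement.
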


In order to obtain the model $f_{\theta}(z):\mathbb{R}^{d}\rightarrow \Omega$,
outputs $r_{\theta}(z)$ and $s_{\theta}(z)$ of the neural network layers
should be fed as the input to the layer $g_{p}(r,s)$:
\begin{equation}
f_{\theta}(z)=g(r_{\theta}(z),s_{\theta}(z)).
\end{equation}

Such a combined model also forms a neural network that can be trained by the
error backpropagation algorithm.

\begin{corollary}
The output of the neural network $f_{\theta}(z)$ always satisfies the
constraints which define the set $\Omega$.
\end{corollary}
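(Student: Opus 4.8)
The plan is to derive this statement as an immediate consequence of the composition structure $f_{\theta}(z)=g_p(r_{\theta}(z),s_{\theta}(z))$ together with the membership guarantee already established in the Theorem. First I would fix an arbitrary input $z\in\mathbb{R}^{d}$ and observe that the preceding neural network layers produce a pair $(r_{\theta}(z),s_{\theta}(z))\in\mathbb{R}^{n}\times\mathbb{R}$. The essential observation is that these values are legitimate inputs to the layer $g_p$: by its definition in (\ref{eq:layer_def}), $g_p(r,s)$ is defined for every $(r,s)\in\mathbb{R}^{n}\times\mathbb{R}$, with no restriction imposed on either $r$ or $s$.

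Next I would invoke Part 1 of the Theorem, which asserts that $g_p(r,s)\in\Omega$ for arbitrary $(r,s)$, since $\alpha_{p,r}(s)=\sigma(s)\cdot\overline{\alpha}_{p,r}\leq\overline{\alpha}_{p,r}$ forces the output onto the segment $[p,\,p+\overline{\alpha}_{p,r}\cdot r]\subset\Omega$. Applying this with the particular choice $r=r_{\theta}(z)$ and $s=s_{\theta}(z)$ immediately gives $f_{\theta}(z)=g_p(r_{\theta}(z),s_{\theta}(z))\in\Omega$. Because $z$ was arbitrary, we obtain $\forall z\in\mathbb{R}^{d}~(f_{\theta}(z)\in\Omega)$, which is exactly the claim.

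There is no real obstacle here, as the entire quantitative content is carried by the Theorem; the proof requires no new estimates. The only subtlety worth stating explicitly is the closure-under-composition point: whatever values the hidden layers emit, they land in the domain of $g_p$, so the feasibility guarantee transfers without any compatibility condition relating $\theta$ and $z$. The one aspect I would take care to emphasize is that this holds \emph{uniformly} over all parameter values $\theta\in\Theta$ and all inputs $z$, and in particular throughout training and at inference, which is precisely what distinguishes these hard constraints from soft penalty-based ones.
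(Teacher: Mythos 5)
Your proposal is correct and matches the paper's (implicit) argument exactly: the corollary is an immediate consequence of Part 1 of the Theorem, since $f_{\theta}(z)=g_{p}(r_{\theta}(z),s_{\theta}(z))$ and $g_{p}(r,s)\in\Omega$ for every pair $(r,s)\in\mathbb{R}^{n}\times\mathbb{R}$, regardless of what the preceding layers emit. The additional remarks you make (no domain restriction on $(r,s)$, uniformity over $\theta$ and $z$) are exactly the points the paper relies on, just stated more explicitly.
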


\subsection{Linear constraints}

In the case of linear constraints $\overline{\alpha}(p, r)$, the upper bound
is determined by the intersection of the ray from point $p$ to direction $r$
with the set of constraints.

Let us consider the intersection with one linear constraint of the form:
\begin{equation}
a_{i}^{T}x\leq b_{i}.
\end{equation}
Then the upper bound for the parameter $\alpha$ is determined by solving the
following system of equations:
\begin{equation}%
\begin{cases}
x=p+\alpha \cdot r,\\
a_{i}^{T}x=b_{i},\\
\alpha \geq0.
\end{cases}
\label{eq:upper_bound_system}%
\end{equation}
This implies that there holds when a solution exists:
\begin{equation}
a_{i}^{T}p+\alpha \cdot a_{i}^{T}r=b_{i},
\end{equation}%
\begin{equation}
\overline{\alpha}_{p,r}^{(i)}=\frac{b_{i}-a_{i}^{T}p}{a_{i}^{T}r}.
\end{equation}
If $a_{i}^{T}r=0$ or $\overline{\alpha}_{i}(p,r)<0$, then the system
(\ref{eq:upper_bound_system}) does not have any solution. In this case,
$\overline{\alpha}_{i}$ can be taken as $+\infty$.

Let we have the system
\begin{equation}%
\begin{cases}
a_{1}^{T}x\leq b_{1},\\
\dots \\
a_{N}^{T}x\leq b_{m},
\end{cases}
\label{eq:first_system_expanded}%
\end{equation}
and for each inequality, the upper bound $\overline{\alpha}_{p,r}^{(i)}$ is
available. Then the upper bound for the whole system of inequalities
(\ref{eq:first_system_expanded}) is determined as:
\begin{equation}
\overline{\alpha}_{p,r}=\min \left \{  \overline{\alpha}_{p,r}^{(i)}\right \}
_{i=1}^{m}.
\end{equation}

In the case of linear constraints, the computational complexity of the forward
pass of the neural network layer is $O(nm)$.

\subsection{Quadratic constraints}

Let the $i$-th quadratic constraint be given in the form:
\begin{equation}
\frac{1}{2}x^{T}P^{(i)}x+q_{i}^{T}x\leq b_{i},
\end{equation}
where the matrix $P^{(i)}$ is positive semidefinite. Then the intersection of
the ray with the constraint is given by the equation:
\begin{equation}
\frac{1}{2}(p+\alpha \cdot r)^{T}P^{(i)}(p+\alpha \cdot r)+q_{i}^{T}%
(p+\alpha \cdot r)=b_{i}.
\end{equation}

It is equivalent to the equation:
\begin{equation}
(r^{T}P^{(i)}r)\alpha^{2}+2(p^{T}P^{(i)}r+q_{i}^{T}r)\alpha+(2q_{i}^{T}%
p+p^{T}P^{(i)}p-2b_{i})=0.
\end{equation}

Depending on the coefficient at $\alpha^{2}$, two cases can be considered:

\begin{enumerate}
\item If $r^{T}P^{(i)}r=0$, then the equation is linear and has the following
solution:
\begin{equation}
\alpha=-\frac{q_{i}^{T}p+\frac{1}{2}p^{T}P^{(i)}p-b_{i}}{p^{T}P^{(i)}%
r+q_{i}^{T}r}.
\end{equation}

\item If $r^{T}P^{(i)}r>0$, then there exist two solutions. However, we can
select only the larger positive solution corresponding to the movement in the
direction of the ray. This solution is:
\begin{equation}
\alpha=-\frac{-(p^{T}P^{(i)}r+q_{i}^{T}r)+\sqrt{D/4}}{r^{T}P^{(i)}r},
\end{equation}%
\begin{equation}
\frac{D}{4}=(p^{T}P^{(i)}r+q_{i}^{T}r)^{2}-(r^{T}P^{(i)}r)\cdot(2q_{i}%
^{T}p+p^{T}P^{(i)}p-2b_{i}),
\end{equation}
because the denominator is positive.
\end{enumerate}

It should be noted that the case $r^{T}P^{(i)}r<0$ is not possible because the
matrix is positive semidefinite. Otherwise, the constraint would define a
non-convex set.

If $\alpha \geq0$, then the upper bound is $\overline{\alpha}_{p,r}%
^{(i)}=\alpha$. Otherwise, if the ray does not intersects the constraint, then
there holds $\overline{\alpha}_{p,r}^{(i)}=+\infty$.

Similarly to the case of linear constraints, if a system of the following
quadratic constraints is given:
\begin{equation}%
\begin{cases}
\frac{1}{2}x^{T}P^{(1)}x+q_{1}^{T}x\leq b_{1},\\
\dots \\
\frac{1}{2}x^{T}P^{(m)}x+q_{N}^{T}x\leq b_{m},
\end{cases}
\end{equation}
then the upper bound for the system is
\begin{equation}
\overline{\alpha}_{p,r}=\min \{ \overline{\alpha}_{p,r}^{(i)}\}_{i=1}^{m}.
\end{equation}

In the case of quadratic constraints, the computational complexity of the
forward pass of the neural network layer is $O(n^{2} m)$.

\subsection{Equality constraints}

Let us consider the case, when the feasible set is defined by a system of
linear equalities and inequalities of the form:
\begin{equation}
x\in \Omega \iff%
\begin{cases}
Ax\leq b,\\
Qx=p,
\end{cases}
\label{eq:eq_ineq_system}%
\end{equation}

In this case, the problem can be reduced to (\ref{eq:first_system}) that is it
can be reduced to a system of inequalities. In order to implement that, we
find and fix a vector $u$, satisfying the system $Qu=p$. If the system does
not have solutions, then the set $\Omega$ is empty. If there exists only one
solution, then the set $\Omega$ consists of one point. Otherwise, there exist
an infinite number of solutions, and it is sufficiently to choose any of them,
for example, by solving the least squares problem:%
\begin{equation}
||Qu-p||^{2}\rightarrow \min.
\end{equation}

Then we find a matrix $R$ which is the kernel basis matrix $Q$, that is $R$
satisfies the following condition:%
\begin{equation}
\forall w~\left(  QRw=0\right)  .
\end{equation}

The matrix $R$ can be obtained by using the SVD decomposition of
$Q\in \mathbb{R}^{\mu \times n}$ as follows:%
\begin{equation}
USV=Q,
\end{equation}
where $U\in \mathbb{R}^{\mu \times \mu}$ is the complex unitary matrix,
$S\in \mathbb{R}^{\mu \times n}$ is the rectangular diagonal matrix with
non-negative real numbers on the diagonal, $V\in \mathbb{R}^{n\times n}$ is the
conjugate transpose of the complex unitary matrix (the right singular
vectors), it contains ordered non-zero diagonal elements.

Then the matrix $R$ is defined as
\begin{equation}
R=\left(  v_{1},\dots,v_{\delta}\right)  ,
\end{equation}
where $\delta$ is the number of zero diagonal elements of $S$, $v_{1}%
,\dots,v_{\delta}$ are columns of the matrix $V$.

Hence, there holds
\begin{equation}
\forall w\in \mathbb{R}^{\delta}~\left(  Q(Rw+u)=p\right)  .
\end{equation}

A new system of constraints imposed on the vector $w$ is defined as:
\begin{equation}
A(Rw+u)\leq b,
\end{equation}
or in the canonical form:
\begin{equation}
Bw\leq t, \label{eq:new_system}%
\end{equation}
where $B=AR$, $t=b-Au$.

So, $w$ is the vector of variables for the new system of inequalities
(\ref{eq:new_system}). For any vector $w$, the vector $x$ satisfying the
initial system (\ref{eq:eq_ineq_system}) can be reconstructed as $x=Rw+u$.

In sum, the resulting model will be defined as
\begin{equation}
f_{\theta}(z)=R\widetilde{f}_{\theta}(z)+u, \label{eq:x_model_from_w}%
\end{equation}
where $\widetilde{f}_{\theta}(z)$ is the model for constraints
(\ref{eq:new_system}).

Let us consider a more general case when an arbitrary convex set as the
intersection of the convex inequality constraints (\ref{eq:first_system}) is
given, but an additional constraint is equality, i.e. there holds:
\begin{equation}
x\in \Omega \iff%
\begin{cases}
h_{i}(x)\leq0,\\
Qx=p.
\end{cases}
\label{eq:eq_arbitrary_ineq_system}%
\end{equation}

In this case, we can also apply the variable replacement to obtain new
(possibly non-linear) constraints of the form:
\begin{equation}
x\in \Omega \iff%
\begin{cases}
h_{i}(Rw+u)\leq0,\\
x=Rw+u,
\end{cases}
\iff%
\begin{cases}
\tilde{h}_{i}^{(R,u)}(w)\leq0,\\
x=Rw+u.
\end{cases}
\end{equation}

In sum, the model can be used for generating solutions $\widetilde{f}_{\theta
}(z)$, satisfying the non-linear constraints $\tilde{h}_{i}^{(R,u)}%
(\widetilde{f}_{\theta}(z))$, and then solutions for $x$ are obtained through
\ (\ref{eq:x_model_from_w}).

\subsection{Constraints imposed on inputs and outputs}

In practice, it may be necessary to set constraints not only on the output
vector $f_{\theta}(z)$, but also joint constraints depending on some inputs.
Suppose, an convex set of $\mu$ constraints imposed on the input $z$ and the
output $f_{\theta}(z)$ is given:%
\begin{equation}
\Lambda \subset \mathbb{R}^{k}\times \mathbb{R}^{n},
\end{equation}
that is, for any $z$, the model $f_{\theta}(z)$ has to satisfy:
\begin{equation}
y=%
\begin{bmatrix}
f_{\theta}(z)\\
z
\end{bmatrix}
\in \Lambda.
\end{equation}

Here $y$ is the concatenation of $f_{\theta}(z)$ and $z$. If the feasible set
is given as an intersection of convex constraints:
\begin{equation}
\Lambda=\left \{  y~|~\Gamma_{i}(y)\leq0,~i=1,...,m\right \}  .
\end{equation}

Then for a fixed $z$, a new system of constraints imposed only on the output
vector $f_{\theta}(z)$ can be built by means of the substitution:
\begin{equation}
G(z)=\left \{  z~|~\gamma_{i}(x;z)\leq0,~i=1,...,m\right \}
\end{equation}
where $\gamma_{i}(x;z)$ is obtained by substituting $z$ into $\Gamma_{i}$.

Here $\gamma_{i}$ depends on $z$ as fixed parameters, and only $x$ is a
variable. For example, if $\Gamma_{i}$ is a linear function, then, after
substituting parameters, the constraint $\gamma_{i}(x;z)\leq0$ will be a new
linear constraint on $x$, or it will automatically be fulfilled. If
$\Gamma_{i}$ is a quadratic function, then the constraint on $x$ is either
quadratic or linear, or automatically satisfied.

New \emph{dynamic} constraints imposed on the output and depending on the
input $z$ are%
\begin{equation}
f_{\theta}(z)\in G(z),
\end{equation}
under condition the input $z$ is from the admissible set
\begin{equation}
z\in \left \{  z~|~\exists x:\left[
\genfrac{}{}{0pt}{}{x}{z}%
\right]  \in \Lambda \right \}  .
\end{equation}

It can be seen from the above that the dynamic constraints can change when $z$
is changing,

\subsection{Projection model}

Note that using the proposed neural network layer with constraints imposed on
outputs, a \emph{projection} can be built as a model that maps points to the
set $\Omega$ and has the idempotency property, that is:%

\begin{equation}
\forall x \in \mathbb{R}^{n} ~ (f_{\theta}(f_{\theta}(x)) = f_{\theta}(x)).
\end{equation}

In other words, the model, implementing the identity substitution inside the
set $\Omega$ and mapping points, which are outside the set $\Omega$, inside
$\Omega$, can be represented as:
\begin{equation}%
\begin{cases}
\forall x\in \Omega~(f_{\theta}(x)=x),\\
\forall x\notin \Omega~(f_{\theta}(x)\in \Omega).
\end{cases}
\end{equation}
%

\begin{figure}
[ptb]
\begin{center}
\includegraphics[
height=1.8926in,
width=3.9479in
]%
{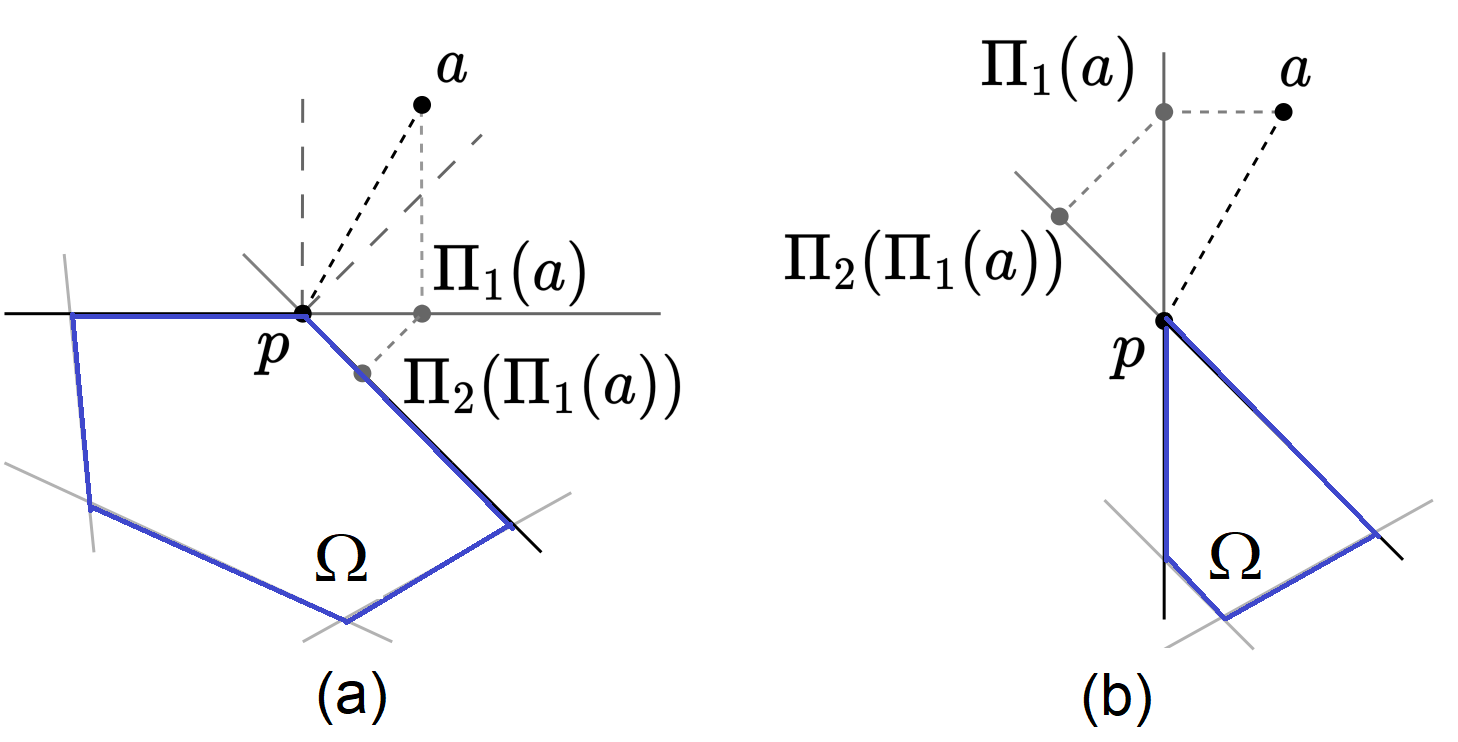}%
\caption{Difficulties of the orthogonal projection onto the intersection of
convex constraints}%
\label{fig:failed_projection}%
\end{center}
\end{figure}

The model can be implemented in two ways:

\begin{enumerate}
\item The first way is to train the model $f_{\theta}(z)=g(r_{\theta
}(z),s_{\theta}(s))$ to obtain the identity substitution by means of
minimizing the functional that penalizes the distance between the image and
the preimage. This can construct an approximation of an arbitrary projection.
For example, we can write for the orthogonal projection by using the $L_{p}%
$-norm the following functional:
\begin{equation}
\mathcal{L}=\frac{1}{N}\sum_{i=1}^{N}\left \Vert f_{\theta}(x_{i}%
)-x_{i}\right \Vert _{p}. \label{eq:l2_orthogonal_loss}%
\end{equation}

As a result, the output of the model always satisfies the constraints, but the
idempotency property cannot be guaranteed, since the minimization of the
empirical risk does not guarantee a strict equality and even an equality with
an error $\varepsilon$ on the entire set $\Omega$. Nevertheless, this approach
can be used when it is necessary to build the projective models for complex
metrics, for example, those defined by neural networks.

\item The central projection can be obtained without optimizing the model by
means of specifying the ray $r_{\theta}(z) := z - p$. In this case, the scale
factor must be specified explicitly without the sigmoid as: $\alpha_{p, r}(s)
= \min \{ 1, \overline{\alpha}_{p, r} \}$.

Then we can write%
\begin{align}
\tilde{g}_{p}(r(x))  &  =p+\min \{1,\overline{\alpha}_{p,(x-p)}\} \cdot
(x-p)\nonumber \\
&  =\left \{
\begin{array}
[c]{cc}%
x, & \overline{\alpha}_{p,(x-p)}\geq1,\\
(1-\overline{\alpha}_{p,(x-p)})p+\overline{\alpha}_{p,(x-p)}x, &
\text{otherwise.}%
\end{array}
\right.
\end{align}

It should be pointed out that other projections, for example, the orthogonal
projection by using the $L_{2}$-norm, cannot be obtained in the same way. Two
examples illustrating two cases of the relationship between $\Omega$ and a
point $a$, which has to be projected on $\Omega$, are given in
Fig.\ref{fig:failed_projection} where the orthogonal projections of the point
$a$ are denoted as $\Pi_{i}(a)$. It can be seen from
Fig.\ref{fig:failed_projection} that the point $a$ must be projected to the
point $p$ located at the intersection of constraints. The projection on the
nearest constraint as well as successive projections on constraints do not
allow mapping the point $a$ to the nearest point inside the set $\Omega$.
\end{enumerate}

%

\begin{figure}
[ptb]
\begin{center}
\includegraphics[
height=2.1837in,
width=5.5248in
]%
{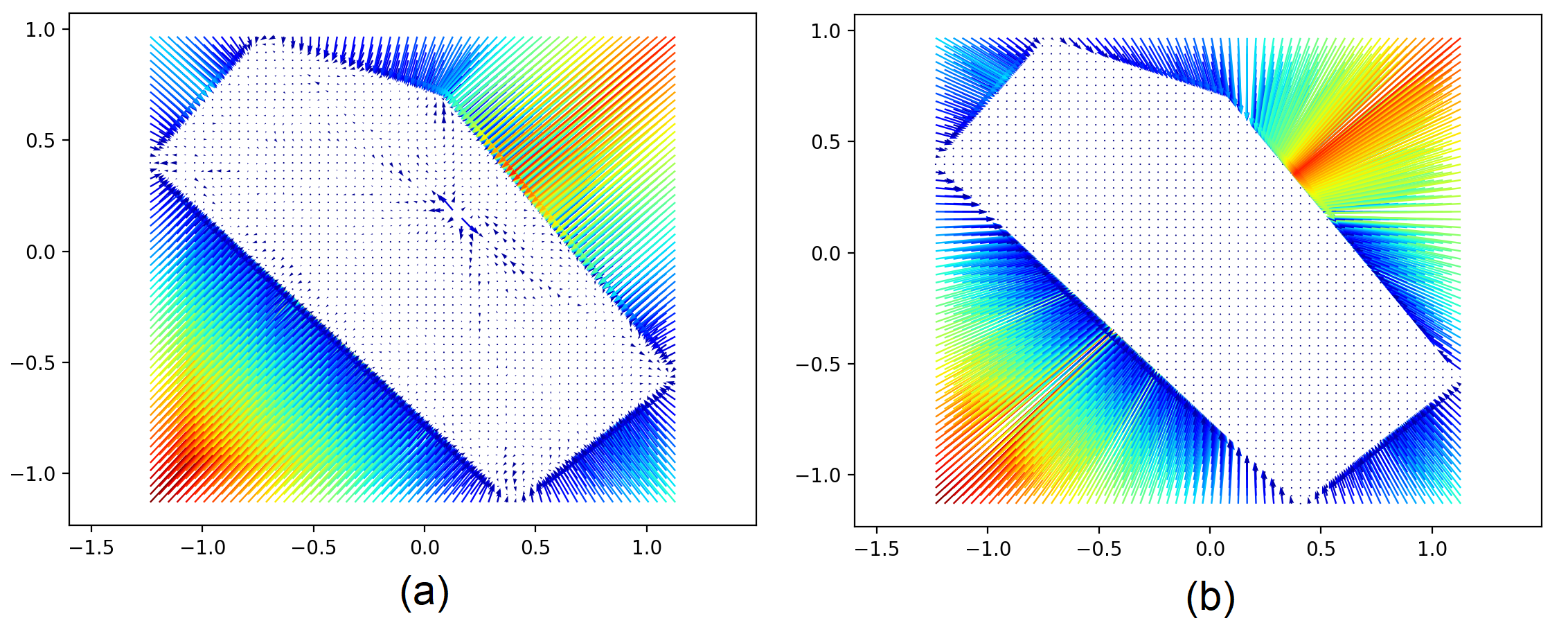}%
\caption{Illustrative examples of the projection models for linear
constraints: (a) the orthogonal approximated projection, (b) the central
projection}%
\label{fig:projection_nn_example}%
\end{center}
\end{figure}

The implementation examples of the projection model are shown in
Fig.\ref{fig:projection_nn_example}. The set $\Omega$ is formed by means of
linear constraints. For each of the examples, a vector field (the quiver plot)
depicted as the set of arrows is depicted where the beginning of each arrow
corresponds to the preimage, and the end corresponds to its projection into
the set of five constraints. On the left picture
(Fig.\ref{fig:projection_nn_example}(a)), results of the approximate
orthogonal projection implemented by a neural network consisting of five
layers are shown. The network parameters were optimized by minimizing
(\ref{eq:l2_orthogonal_loss}) with the learning rate $10^{-2}$ and the number
of iterations $1000$. It can be seen from the left picture that there are
artifacts in the set $\Omega$, which correspond to areas with the large
approximation errors. On the right picture
(Fig.\ref{fig:projection_nn_example}(b)), the result of the neural network
without trainable parameters is depicted. The neural network implements the
central projection here. It can be seen from
Fig.\ref{fig:projection_nn_example}(b) that there are no errors when the
central projection is used.%

\begin{figure}
[ptb]
\begin{center}
\includegraphics[
height=2.1494in,
width=5.6224in
]%
{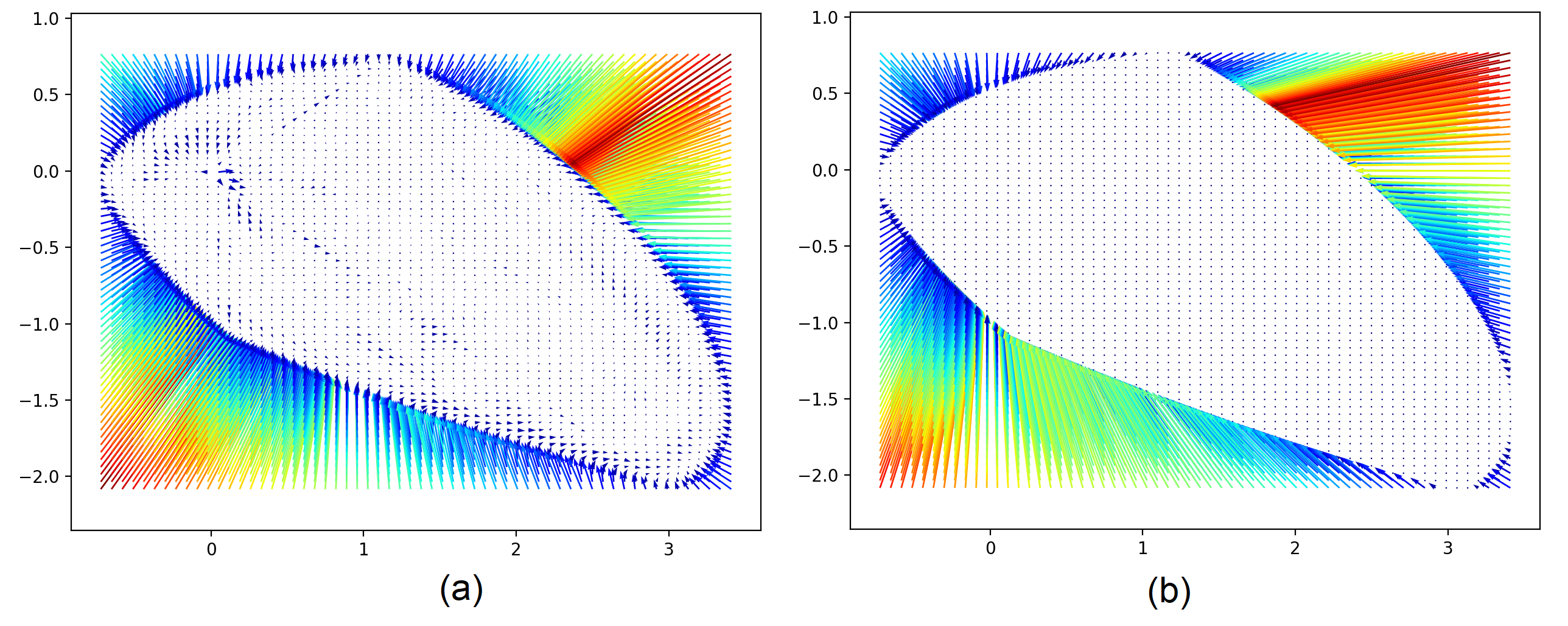}%
\caption{Examples of the projection models for quadratic constraints: (a) the
orthogonal approximated projection, (b) the central projection}%
\label{fig:quadratic_projection_nn_example}%
\end{center}
\end{figure}

Similar examples for three quadratic constraints are shown in
Fig.\ref{fig:quadratic_projection_nn_example}.

\subsection{Solutions at boundaries}

In addition to the tasks considered above, the developed method can be applied
to obtain solutions in a non-convex boundary set denoted as $\partial \Omega$.
Suppose $\sigma(s)=1$. Then we can write
\begin{equation}
g(r)=p+\overline{\alpha}(p,r)\cdot r\in \partial \Omega.
\end{equation}

The above implies that $g(r)$ is on the boundary by $r \neq \mathbf{0}$.

It is noteworthy that this approach allows us to construct a mapping onto a
non-convex connected union of convex sets. On the other hand, an arbitrary
method based on a convex combination of basis vectors, where weights of the
basis are computed using the \emph{softmax} operation, allows us to build
points only inside the feasible set, but not at the boundary.

As an example, consider the problem of projecting points onto the boundary of
a convex set:%
\begin{equation}%
\begin{array}
[c]{c}%
\min \quad||z-f_{\theta}(z)||_{p}\\
\text{s.t.}\quad f_{\theta}(z)\in \partial \Omega,
\end{array}
\label{eq:edge_proj_objective}%
\end{equation}
where $||\cdot||_{p}$ is the $p$-th norm.

Illustrative examples of projection onto an area defined by a set of linear
constraints for the $L_{1}$ and $L_{2}$-norms are shown in
Fig.\ref{fig:project_onto_edge} where the left picture
(Fig.\ref{fig:project_onto_edge}(a)) corresponds to the $L_{1}$-norm whereas
the right picture (Fig.\ref{fig:project_onto_edge}(b)) considers projections
for the $L_{2}$-norm. To solve each of the problems, a neural network
consisting of $5$ layers of size $100$ and minimizing
(\ref{eq:edge_proj_objective}) is trained. Its set of values is given as
$\partial \Omega$.%

\begin{figure}
[ptb]
\begin{center}
\includegraphics[
height=2.1318in,
width=5.5758in
]%
{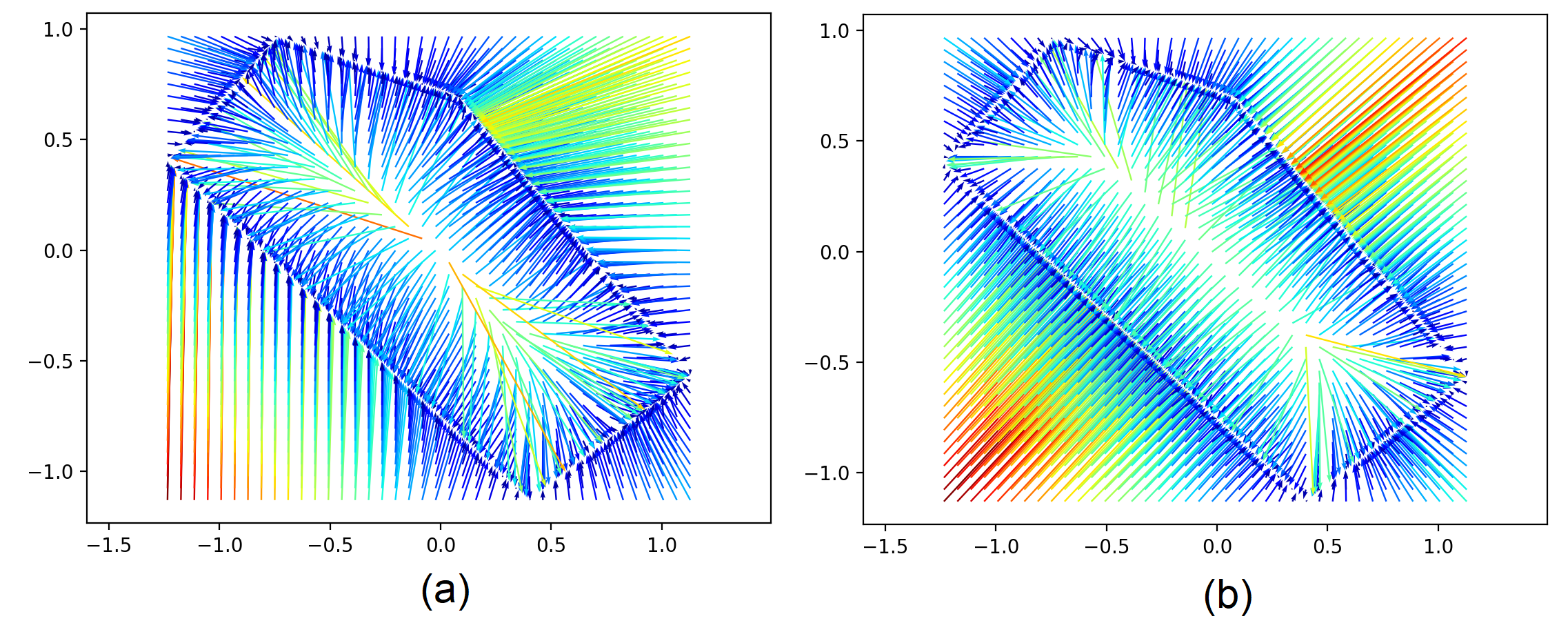}%
\caption{Illustrative examples of approximation of the projection onto the
boundary by using: (a) the $L_{1}$-norm, (b) the $L_{2}$-norm}%
\label{fig:project_onto_edge}%
\end{center}
\end{figure}

\subsection{A general algorithm of the method}

For systems of constraints containing linear equality constraints as well as
linear and quadratic inequality constraints, the general algorithm consists of
the following steps:

\begin{enumerate}
\item Eliminate the linear equality constraints.

\item Construct a new system of linear and quadratic constraints in the form
of inequalities.

\item Search for interior point $p$.

\item Train a neural network $\tilde{f}^{(\ge)}_{\theta}(z)$ for the
inequality constraints.

\item Train the final neural network $f_{\theta}(z)$, satisfying all constraints.
\end{enumerate}

\section{Numerical experiments}

\subsection{Optimization problems}

A neural network with constraints imposed on outputs should at least allow
finding a solution to the constrained optimization problems. To implement that
for each particular optimization problem, the vector of input parameters
$z_{i}$ is optimized so as to minimize the loss function $l_{i}\left(
f_{\theta}(z_{i})\right)  $. For testing the optimization method with
constraints, $50$ sets of optimization problems (objective functions and
constraints) with variables of dimensionality $2$, $5$, and $10$ are randomly
generated. In each set of problems, we generate different numbers of
constraints: $20$, $50$, $100$, and $200$. The linear and quadratic
constraints are separately generated to construct different sets of
optimization problems. The constraints are generated so that the feasible set
$\Omega$ is bounded (that is, it does not contain a ray that does not cross
boundaries of $\Omega$).

To generate each optimization problem, $m$ constraints are first generated,
then parameters of the loss functions are generated. For systems of linear
constraints, the following approach is used: a set of $m$ vectors $a_{1}%
,\dots,a_{m}\sim \mathcal{N}(0,I)$ is generated, which simultaneously specify
points belonging to hyperplanes and normals to these hyperplanes. Then the
right side of the constraint system is:
\begin{equation}
b_{i}=a_{i}^{T}a_{i},
\end{equation}
and the whole system of linear constraints:
\begin{equation}
\Omega=\left \{  x~|~a_{i}^{T}x\leq b_{i},~i=1,...,m\right \}  .
\end{equation}

For system of quadratic constraints, positive semidefinite matrices $P^{(i)}$
and vectors $q_{i}\sim \mathcal{N}(0,I)$, $i=1,...,m$, are first generated.
Then the constraints are shifted in such a way as to satisfy the constraints
with some margin $b_{i}>0$. Hence, we obtain the system of quadratic
constraints:
\begin{equation}
\Omega=\left \{  x~|~x^{T}P^{(i)}x+q_{i}^{T}x\leq b_{i},~i=1,...,m\right \}  .
\end{equation}

The relative error is used for comparison of the models. It is measured by the
value of the loss function of the obtained solution $l_{i}(f_{\theta}(x_{i}))$
with respect to the loss function of the reference solution $l_{i}(x_{i}%
^{\ast})$:
\begin{equation}
RE=\frac{\max \left \{  0,l_{i}\left(  f_{\theta}(x_{i})\right)  -l_{i}%
(x_{i}^{\ast})\right \}  }{|l_{i}(x_{i}^{\ast})|}\cdot100\%.
\end{equation}

The reference solutions are obtained using the \emph{OSQP} algorithm
\cite{Stellato-etal-20} designed exclusively for solving the linear and
quadratic optimization problems.

Tables \ref{tab:linear_loss_rel_error_percent_1} and
\ref{tab:linear_loss_rel_error_percent_2} show relative errors for
optimization problems with linear loss functions and linear and quadratic
constraints, respectively, where $m$ is the number of constraints, $n$ is the
number of variables in the optimization problems. The relative errors are
shown in tables according to percentiles ($25\%$, $50\%$, $75\%$, $100\%$) of
the probability distribution of optimization errors, which are obtained as a
result of multiple experiments. Tables
\ref{tab:quadratic_loss_rel_error_percent_1} and
\ref{tab:quadratic_loss_rel_error_percent_2} show relative errors for
optimization problems with quadratic loss functions and linear and quadratic
constraints, respectively.

It can be seen from Tables \ref{tab:linear_loss_rel_error_percent_1}%
-\ref{tab:quadratic_loss_rel_error_percent_2} that the proposed method allows
us to optimize the input parameters $r$ and $s$ for the proposed layer. One
can see that the introduced layer does not degrade the gradient for the whole
neural network.%

\begin{table}[tbp] \centering
\caption{Relative errors for problems with linear loss functions and linear constraints}%
\begin{tabular}
[c]{llrrrr}\hline
$m$ & $n$ & $25\%$ & $50\%$ & $75\%$ & $100\%$\\ \hline
& $2$ & $8.3\cdot10^{-6}$ & $3.6\cdot10^{-5}$ & $1.8\cdot10^{-4}$ &
$1.3\cdot10^{-2}$\\
$50$ & $5$ & $1.9\cdot10^{-3}$ & $2.4\cdot10^{-3}$ & $3.3\cdot10^{-3}$ &
$3.5\cdot10^{-2}$\\
& $10$ & $1.1\cdot10^{-2}$ & $1.4\cdot10^{-2}$ & $2.3\cdot10^{-2}$ &
$1.8\cdot10^{-1}$\\ \hline
& $2$ & $2.4\cdot10^{-6}$ & $2.6\cdot10^{-5}$ & $1.8\cdot10^{-4}$ &
$6.7\cdot10^{-3}$\\
$100$ & $5$ & $1.8\cdot10^{-3}$ & $2.1\cdot10^{-3}$ & $3.2\cdot10^{-3}$ &
$6.4\cdot10^{-1}$\\
& $10$ & $9.1\cdot10^{-3}$ & $1.1\cdot10^{-2}$ & $1.3\cdot10^{-2}$ &
$1.3\cdot10^{-1}$\\ \hline
& $2$ & $4.5\cdot10^{-6}$ & $1.5\cdot10^{-5}$ & $1.7\cdot10^{-4}$ &
$7.9\cdot10^{-2}$\\
$200$ & $5$ & $1.7\cdot10^{-3}$ & $2.6\cdot10^{-3}$ & $3.8\cdot10^{-3}$ &
$1.0\cdot10^{-2}$\\
& $10$ & $8.4\cdot10^{-3}$ & $1.2\cdot10^{-2}$ & $1.5\cdot10^{-2}$ &
$9.9\cdot10^{-2}$\\ \hline
\end{tabular}
\label{tab:linear_loss_rel_error_percent_1}%
\end{table}%
%

\begin{table}[tbp] \centering
\caption{Relative errors for problems with linear loss functions and quadratic constraints}%
\begin{tabular}
[c]{llrrrr}\hline
$m$ & $n$ & $25\%$ & $50\%$ & $75\%$ & $100\%$\\ \hline
& $2$ & $5.5\cdot10^{-6}$ & $1.3\cdot10^{-4}$ & $1.6\cdot10^{-3}$ &
$1.6\cdot10^{-2}$\\
$50$ & $5$ & $3.4\cdot10^{-6}$ & $7.7\cdot10^{-5}$ & $9.9\cdot10^{-4}$ &
$7.5\cdot10^{-2}$\\
& $10$ & $3.5\cdot10^{-14}$ & $4.3\cdot10^{-6}$ & $1.8\cdot10^{-4}$ &
$3.3\cdot10^{-3}$\\ \hline
& $2$ & $8.0\cdot10^{-6}$ & $1.6\cdot10^{-4}$ & $1.8\cdot10^{-3}$ &
$2.6\cdot10^{-2}$\\
$100$ & $5$ & $7.3\cdot10^{-5}$ & $8.1\cdot10^{-4}$ & $3.7\cdot10^{-3}$ &
$6.9\cdot10^{-2}$\\
& $10$ & $3.1\cdot10^{-6}$ & $1.3\cdot10^{-5}$ & $8.2\cdot10^{-4}$ &
$1.7\cdot10^{-2}$\\ \hline
& $2$ & $1.1\cdot10^{-5}$ & $3.1\cdot10^{-4}$ & $1.5\cdot10^{-3}$ &
$6.0\cdot10^{-2}$\\
$200$ & $5$ & $1.6\cdot10^{-4}$ & $1.1\cdot10^{-3}$ & $3.8\cdot10^{-3}$ &
$1.7\cdot10^{-1}$\\
& $10$ & $5.3\cdot10^{-6}$ & $2.8\cdot10^{-4}$ & $1.4\cdot10^{-3}$ &
$1.2\cdot10^{-2}$\\ \hline
\end{tabular}
\label{tab:linear_loss_rel_error_percent_2}%
\end{table}%
%

\begin{table}[tbp] \centering
\caption{Relative errors for problems with quadratic loss functions and linear constraints}%
\begin{tabular}
[c]{llrrrr}\hline
$m$ & $n$ & $25\%$ & $50\%$ & $75\%$ & $100\%$\\ \hline
& $2$ & $2.0\cdot10^{-5}$ & $3.9\cdot10^{-5}$ & $8.7\cdot10^{-5}$ &
$1.5\cdot10^{-4}$\\
$50$ & $5$ & $1.0\cdot10^{-3}$ & $1.2\cdot10^{-3}$ & $1.8\cdot10^{-3}$ &
$5.5\cdot10^{-3}$\\
& $10$ & $5.0\cdot10^{-3}$ & $6.9\cdot10^{-3}$ & $8.9\cdot10^{-3}$ &
$1.5\cdot10^{-2}$\\ \hline
$100$ & $2$ & $1.4\cdot10^{-5}$ & $5.4\cdot10^{-5}$ & $8.5\cdot10^{-5}$ &
$1.4\cdot10^{-4}$\\
& $5$ & $8.7\cdot10^{-4}$ & $1.6\cdot10^{-3}$ & $2.5\cdot10^{-3}$ &
$4.5\cdot10^{-3}$\\
& $10$ & $5.6\cdot10^{-3}$ & $8.0\cdot10^{-3}$ & $1.1\cdot10^{-2}$ &
$1.8\cdot10^{-2}$\\ \hline
$200$ & $2$ & $1.5\cdot10^{-5}$ & $5.0\cdot10^{-5}$ & $1.0\cdot10^{-4}$ &
$2.0\cdot10^{-4}$\\
& $5$ & $9.6\cdot10^{-4}$ & $1.3\cdot10^{-3}$ & $1.9\cdot10^{-3}$ &
$3.9\cdot10^{-3}$\\
& $10$ & $6.3\cdot10^{-3}$ & $7.7\cdot10^{-3}$ & $1.1\cdot10^{-2}$ &
$1.6\cdot10^{-2}$\\ \hline
\end{tabular}
\label{tab:quadratic_loss_rel_error_percent_1}%
\end{table}%
%

\begin{table}[tbp] \centering
\caption{Relative errors for problems with quadratic loss functions and quadratic constraints}%
\begin{tabular}
[c]{llrrrr}\hline
$m$ & $n$ & $25\%$ & $50\%$ & $75\%$ & $100\%$\\ \hline
& $2$ & $5.4\cdot10^{-6}$ & $7.1\cdot10^{-6}$ & $3.6\cdot10^{-5}$ &
$4.4\cdot10^{-4}$\\
$50$ & $5$ & $2.5\cdot10^{-4}$ & $4.9\cdot10^{-4}$ & $1.1\cdot10^{-3}$ &
$3.7\cdot10^{-3}$\\
& $10$ & $2.5\cdot10^{-3}$ & $3.5\cdot10^{-3}$ & $5.7\cdot10^{-3}$ &
$1.2\cdot10^{-2}$\\ \hline
& $2$ & $4.4\cdot10^{-6}$ & $6.0\cdot10^{-6}$ & $3.5\cdot10^{-5}$ &
$7.6\cdot10^{-3}$\\
$100$ & $5$ & $4.4\cdot10^{-4}$ & $8.9\cdot10^{-4}$ & $1.4\cdot10^{-3}$ &
$3.0\cdot10^{-3}$\\
& $10$ & $3.0\cdot10^{-3}$ & $4.3\cdot10^{-3}$ & $6.1\cdot10^{-3}$ &
$1.5\cdot10^{-2}$\\ \hline
& $2$ & $5.7\cdot10^{-6}$ & $1.2\cdot10^{-5}$ & $3.9\cdot10^{-5}$ &
$1.2\cdot10^{-4}$\\
$200$ & $5$ & $3.2\cdot10^{-4}$ & $5.9\cdot10^{-4}$ & $1.2\cdot10^{-3}$ &
$3.6\cdot10^{-3}$\\
& $10$ & $3.4\cdot10^{-3}$ & $4.9\cdot10^{-3}$ & $7.2\cdot10^{-3}$ &
$1.2\cdot10^{-2}$\\ \hline
\end{tabular}
\label{tab:quadratic_loss_rel_error_percent_2}%
\end{table}%

An alternative way to solve the problem is to use optimization layers proposed
in \cite{Amos-Kolter-17} or \cite{Agrawal-etal-19}. However, this solution is
not justified due to the performance reasons. For example, consider a problem
with quadratic constraints and a linear loss function from previous
experiments. In this problem, parameters, which are fed to the input of the
neural network, are optimized in such a way as to minimize the loss function
depending on the network output. By means of the optimization layer, the
problem of projection into constraints of the form
(\ref{eq:edge_proj_objective}) is solved in this case instead of solving the
original optimization problem.

To compare the method \cite{Amos-Kolter-17,Agrawal-etal-19} with the proposed
method, the library CVXPYLayers \cite{Agrawal-etal-19} is used, which allows
us to set differentiable optimization layers within the neural network.
Fig.\ref{fig:performance_comparison} compares the optimization time of the
input vector by the same hyperparameters under condition that the algorithm
CVXPYLayers is stopped after five minutes from the beginning of the
optimization process even if the optimization is not completed. It can be seen
from Fig.\ref{fig:performance_comparison} that the proposed algorithm requires
significantly smaller times to obtain the solution. It should be noted that
this experiment illustrates the inexpediency of constructing the
\emph{constrained neural networks} by solving the projection optimization
problem during the forward pass. Nevertheless, the use of such layers is
justified if, for example, it is required to obtain a strictly orthogonal projection.%

\begin{figure}
[ptb]
\begin{center}
\includegraphics[
height=2.2743in,
width=5.0121in
]%
{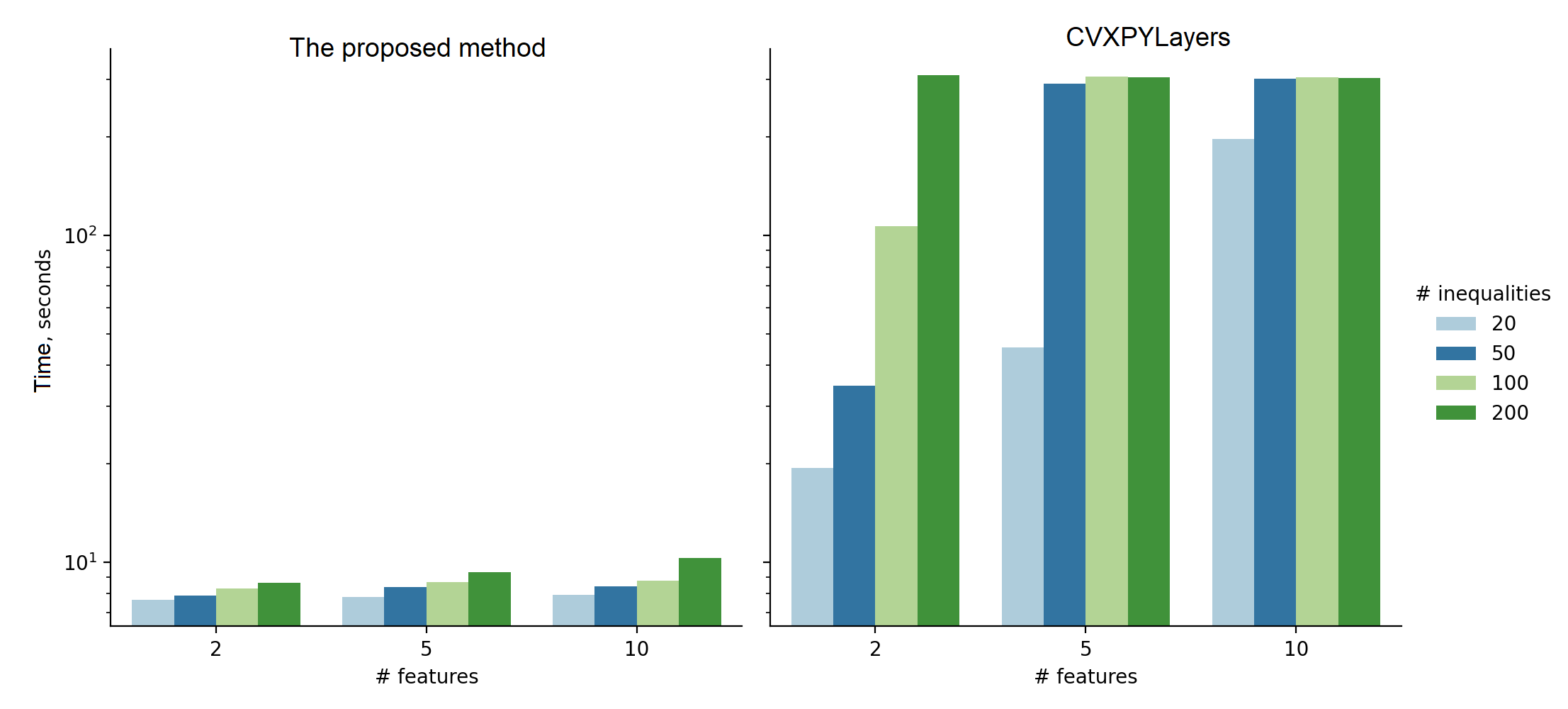}%
\caption{Comparison of the computation time of the proposed neural network and
the projection using CVXPYLayers}%
\label{fig:performance_comparison}%
\end{center}
\end{figure}

The proposed structure of the neural network allows us to implement algorithms
for solving arbitrary problems of both convex and non-convex optimization. For
example, Fig.\ref{fig:rosenbrock} shows optimization trajectories for the
Rosenbrock function \cite{Rosenbrock-1960} with quadratic constraints:
\begin{equation}
\mathcal{L}_{Ros}(x)=(1-x_{1}^{2})+100(x_{2}-x_{1}^{2})^{2},
\end{equation}%
\begin{equation}
\Omega_{Ros}=\left \{  x~|~x_{1}^{2}+x_{2}^{2}\leq2\right \}  .
\end{equation}

This function with constraints has a global minimum at the point $(1,1)$. The
bound for the constraint set $\Omega_{Ros}$ is depicted by the large black
circle. For updating the parameters, $2000$ iterations of the Adam algorithm
are used with the learning rate $0.1$. $9$ points on a uniform grid from
$-0.75$ to $0.75$ are chosen as starting points. For each starting point, an
optimization trajectory is depicted such that its finish is indicated by a
white asterisk. Three different scenarios are considered:

\begin{enumerate}
\item[(a)] \emph{The central projection} optimizes the input of a layer with
the constrained output that performs the central projection. Such a layer
implements the identity mapping inside the constraints and maps the outer
points to the boundary.

\item[(b)] \emph{The hidden space} $(r,s)$ optimizes the input parameters of
the proposed layer with constraints ($r$ is the ray, $s$ is the scalar that
defines a shift along the ray).

\item[(c)] \emph{The projection neural network} is a neural network which
consists of $5$ fully connected layers of size $100$ and the proposed layer
with constraints. The input parameters of the entire neural network are optimized.
\end{enumerate}

%

\begin{figure}
[ptb]
\begin{center}
\includegraphics[
height=1.817in,
width=5.6937in
]%
{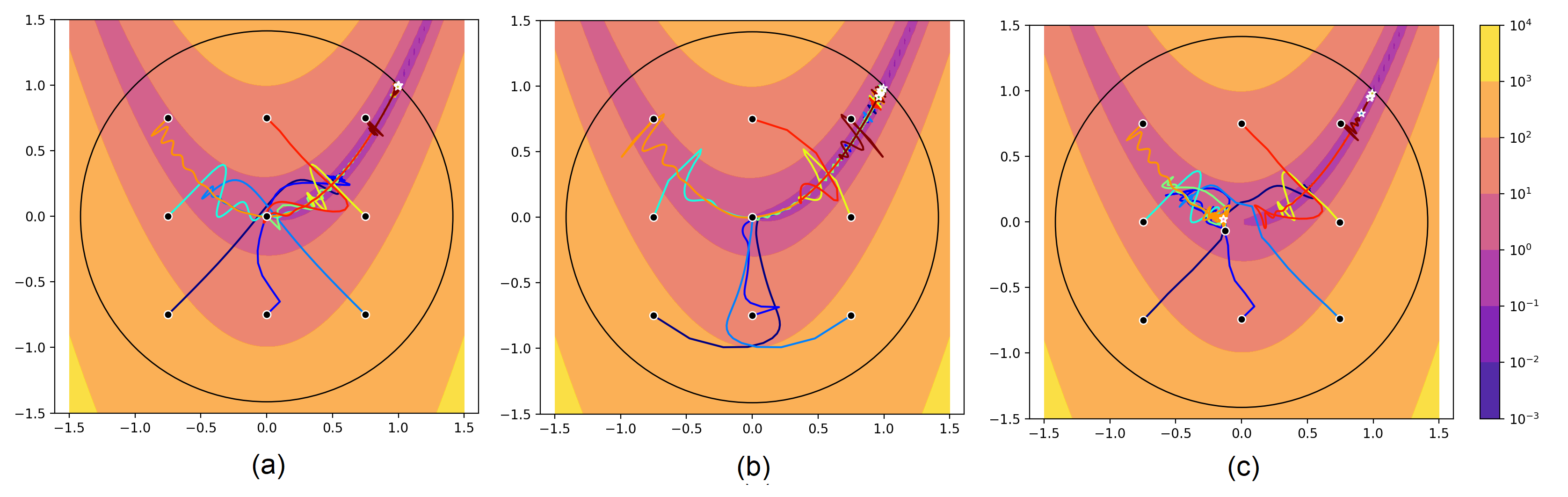}%
\caption{Optimization trajectories for the Rosenbrock function with quadratic
constraints: (a) the central projection, (b) the hidden space $(r,s)$, (c) the
projection neural network}%
\label{fig:rosenbrock}%
\end{center}
\end{figure}

Fig.\ref{fig:bird} shows the optimization trajectories for the non-convex Bird
function from \cite{Mishra-06}:
\begin{equation}
\mathcal{L}_{Bird}(x)=sin(x_{2})e^{(1-cos(x_{1}))^{2}}+cos(x_{1}%
)e^{(1-sin(x_{2}))^{2}}+(x_{1}-x_{2})^{2}, \label{eq:bird}%
\end{equation}%
\begin{equation}
\Omega_{Bird}=\left \{  x~|~(x_{1}+5)^{2}+(x_{2}+5)^{2}<25\right \}  .
\end{equation}

This function has four local minima in the region under consideration, two of
which lie on the boundary of the set $\Omega_{Bird}$, which is depicted by the
large black circle in Fig.\ref{fig:bird}.%

\begin{figure}
[ptb]
\begin{center}
\includegraphics[
height=1.8627in,
width=5.7385in
]%
{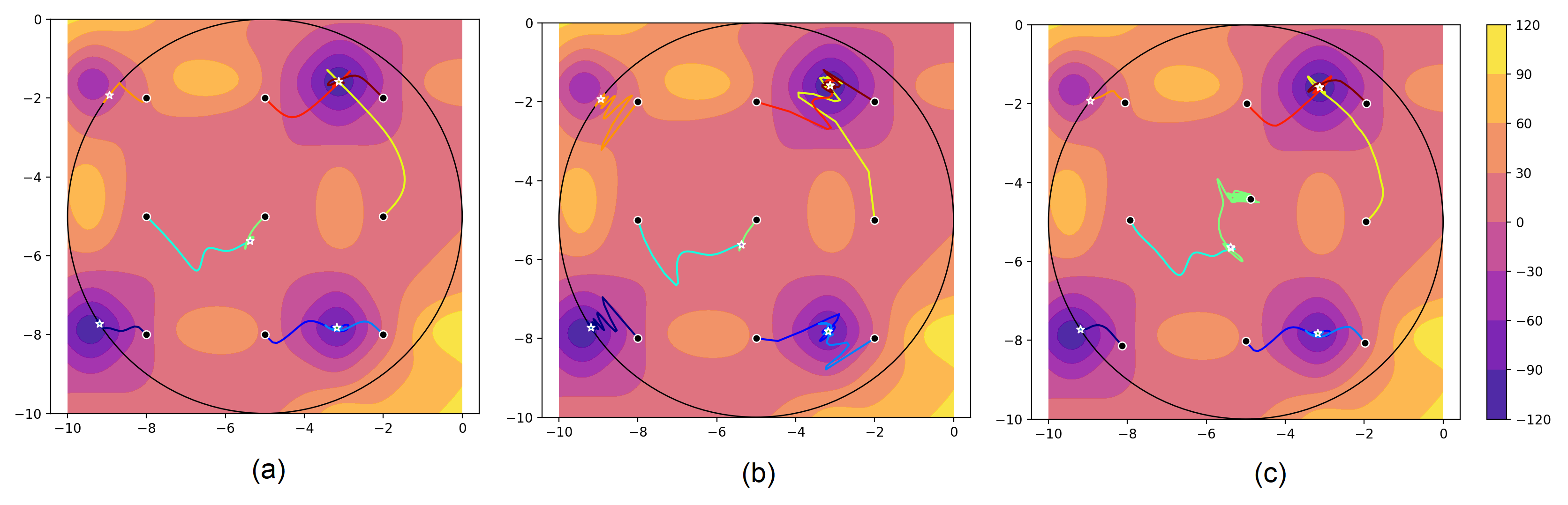}%
\caption{Optimization trajectories for the Bird function \ref{eq:bird}: (a)
the central projection, (b) the hidden space $(r,s)$, (c) the projection
neural network}%
\label{fig:bird}%
\end{center}
\end{figure}

\subsection{A classification example}

In order to illustrate capabilities of neural networks with the output
constraints, consider the classification problem by using an example with the
dataset Olivetti Faces taken from package \textquotedblleft
Scikit-Learn\textquotedblright. The dataset contains $400$ images of the size
$64\times64$ divided into $40$ classes. We construct a model whose output is a
discrete probability distribution that is
\begin{equation}
f_{\theta}(z)\in \left \{  x~|~x_{i}\geq0,~\mathbf{1}^{T}x=1\right \}  .
\label{eq:proba_distr}%
\end{equation}

It should be noted that traditionally the \emph{softmax} operation is used to
build a neural network whose output is a probability distribution.

For comparison purposes, Fig.\ref{fig:classification_comparison} shows how the
loss functions depend on the epoch number for the training and testing
samples. Each neural network model contains $5$ layers of size $300$ and is
trained using \emph{Adam} on $5000$ epochs with the batch size $200$ and the
learning rate $10^{-4}$ to minimize the cross entropy. Three types of final
layers are considered to satisfy the constraints imposed on the probability
distributions (\ref{eq:proba_distr}):

\begin{itemize}
\item \emph{Constraints} means that the proposed layer of the neural network
imposes constraints on the input $(r,s)$;

\item \emph{Projection} means that the proposed layer projects the input to
the set of constraints;

\item \emph{Softmax} is the traditional \emph{softmax} layer.
\end{itemize}

%

\begin{figure}
[ptb]
\begin{center}
\includegraphics[
height=2.488in,
width=5.5943in
]%
{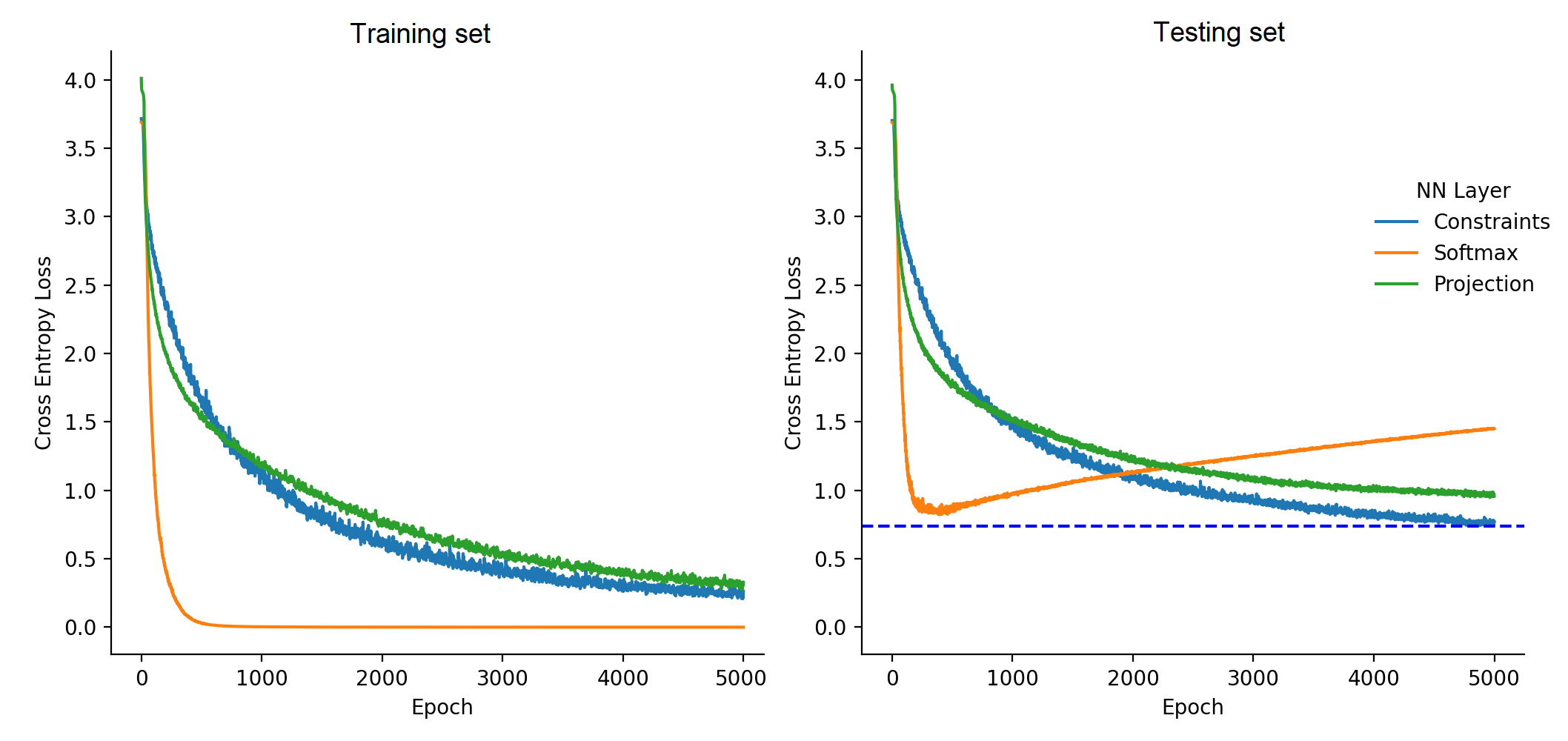}%
\caption{Comparison of cross entropy for different types of the final layer of
the classification neural network}%
\label{fig:classification_comparison}%
\end{center}
\end{figure}
%

\begin{figure}
[ptb]
\begin{center}
\includegraphics[
height=2.4959in,
width=5.6119in
]%
{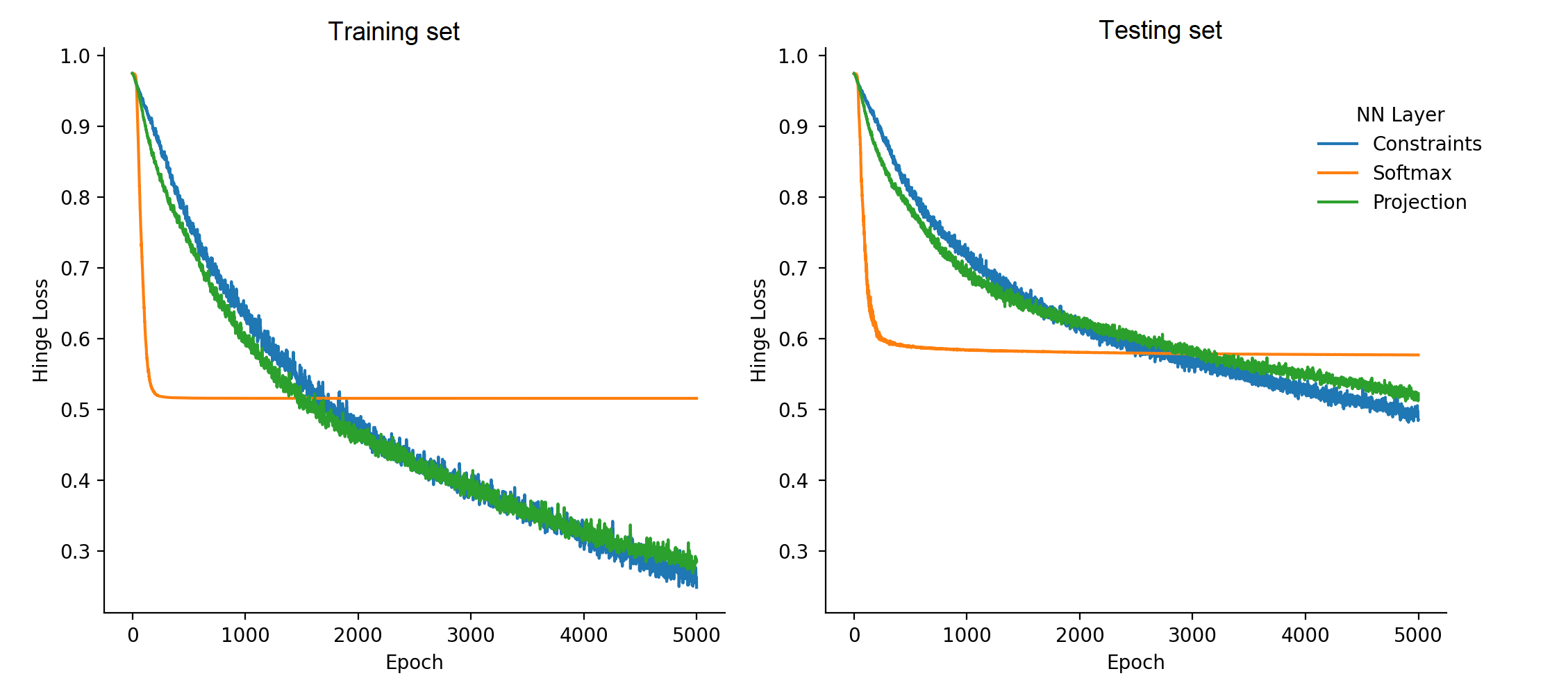}%
\caption{Comparison of \textquotedblleft Hinge loss\textquotedblright \ for
different types of the final layer of the classification neural network}%
\label{fig:classification_comparison_hinge}%
\end{center}
\end{figure}

The dotted line in Fig.\ref{fig:classification_comparison} denotes the minimum
value of the loss function on the testing set. It can be seen from
Fig.\ref{fig:classification_comparison} that all types of layers allow solving
the classification problem. However, the proposed layers in this case provide
slower convergence than the traditional \emph{softmax}. This can be explained
by the logarithm in the cross entropy expression, which is \emph{compensated}
by the exponent in the \emph{softmax}. Nevertheless, the proposed layers can
be regarded as a more general solution. In addition, it can be seen from
Fig.\ref{fig:classification_comparison_hinge} that this hypothesis is
confirmed if another loss function is used, namely \textquotedblleft Hinge
loss\textquotedblright. One can see from
Fig.\ref{fig:classification_comparison_hinge} that the \emph{softmax} also
converges much faster, but to a worse local optimum.

In addition to the standard constraints (\ref{eq:proba_distr}), new
constraints can be added, for example, upper bounds $\overline{p}_{i}$ for
each probability $x_{i}$:
\begin{equation}
f_{\theta}(z)\in \left \{  x~|~0\leq x_{i}\leq \overline{p}_{i},~\mathbf{1}%
^{T}x=1\right \}  .
\end{equation}

This approach can play a balancing role in training, by reducing the influence
of already correctly classified points on the loss function. To illustrate how
the neural network is trained with these constraints and to simplify the
result visualization, a simple classic classification dataset
\textquotedblleft Iris\textquotedblright \ is considered. It contains only $3$
classes and $150$ examples. Three classes allow us to visualize the results by
means of the unit simplex. In this example we set upper bounds $\overline{p}_i = 0.75$ for $i=1,2,3$. Fig.\ref{fig:simplex} shows the three-dimensional
simplices and points (small circles) that are outputs of the neural network
trained on 100, 500, and 1000 epochs. The neural network consists of $5$
layers of size $64$. Colors of small circles indicate the corresponding
classes (Setosa, Versicolour, Virginica). It can be seen from Fig.\ref{fig:simplex} that the constraints affect not only when the output points
are very close to them, but also throughout the network training.%

\begin{figure}
[ptb]
\begin{center}
\includegraphics[
height=2.0157in,
width=5.2284in
]%
{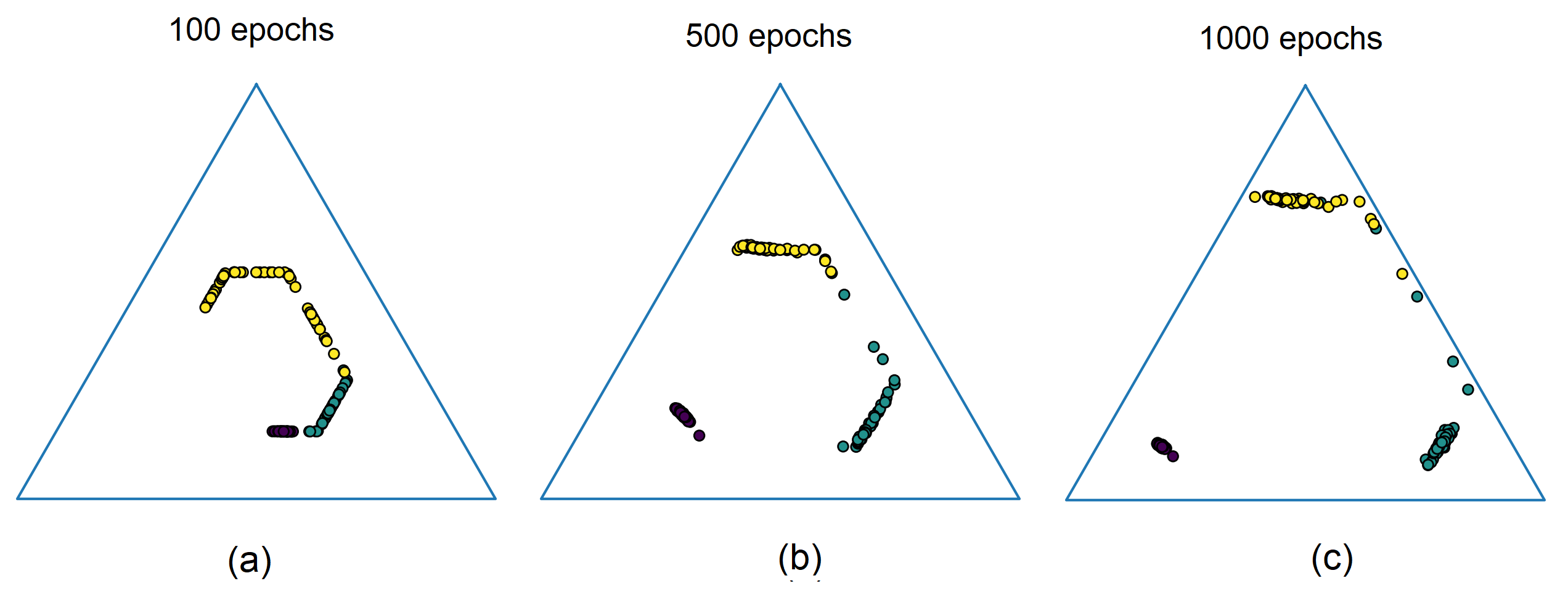}%
\caption{The class probability distributions by different number of epochs for
training: (a) 100 epochs, (b) 500 epochs, (c) 1000 epochs, }%
\label{fig:simplex}%
\end{center}
\end{figure}

\section{Conclusion}

The method, which imposes hard constraints on the neural network output
values, has been presented. The method is extremely simple from the
computational point of view. It is implemented by the additional neural
network layer with constraints for output. Applications of the proposed method
have been demonstrated by numerical experiments with several optimization and
classification problems.

The proposed method can be used in various applications, allowing to impose
linear and quadratic inequality constraints and linear equality constraints on
the neural network outputs, as well as to constrain jointly inputs and
outputs, to approximate orthogonal projections onto a convex set or to
generate output vectors on a boundary set.

We have considered cases of using the proposed method under condition of the
non-convex objective function, for example, the non-convex Bird function used
in numerical experiments set of constraints. At the same time, the feasible
set formed by constraints has been convex because the convexity property has
been used in the proposed method when the point of intersection of the ray $r$
and one of the constraints was determined. However, it is interesting to
extend the ideas behind the proposed method to the case of non-convex
constraints. This extension of the method can be regarded as a direction for
further research.

It should be pointed out that an important disadvantage of the proposed method
is that it works with bounded constraints. This implies that the conic
constraints cannot be analyzed by means of the method because the point of
intersection of the ray $r$ with the conic constraint is in the infinity. We
could restrict a conic constraint by some bound in order to find an
approximate solution. However, it is interesting and important to modify the
proposed method to solve the problems with the conic constraints. This is
another direction for research.

There are not many models that actually realize the hard constraints imposed
on inputs, outputs and hidden parameters of neural networks. Therefore, new
methods which outperform the presented method are of the highest interest.

Another important direction for extending the proposed method is to consider
various machine learning applications, for example, physics-informed neural
networks which can be regarded as a basis for solving complex applied problems
\cite{Kollmannsberger-etal-2021,Raissi-etal-19}. Every application requires to
adapt and modify the proposed method and can be viewed as a separate important
research task for further study.

\bibliographystyle{unsrt}
\bibliography{Constrained_NN}

\begin{thebibliography}{10}

\bibitem{Marquez_Neila-etal-17}
P.~Marquez-Neila, M.~Salzmann, and P.~Fua.
\newblock Imposing hard constraints on deep networks: Promises and limitations.
\newblock In {\em CVPR Workshop on Negative Results in Computer Vision}, pages
  1--9, 2017.

\bibitem{Frerix-etal-20}
T.~Frerix, M.~Niessner, and D.~Cremers.
\newblock Homogeneous linear inequality constraints for neural network
  activations.
\newblock In {\em Proceedings of the IEEE/CVF Conference on Computer Vision and
  Pattern Recognition Workshops}, pages 748--749, 2020.

\bibitem{Lee-Mehta-etal-19}
Jay~Yoon Lee, S.V. Mehta, M.~Wick, J.-B. Tristan, and J.~Carbonell.
\newblock Gradient-based inference for networks with output constraints.
\newblock In {\em Proceedings of the AAAI Conference on Artificial Intelligence
  (AAAI-19)}, volume~33, pages 4147--4154, 2019.

\bibitem{Donti-etal-21}
P.L. Donti, D.~Rolnick, and J.Z. Kolter.
\newblock {DC}3: A learning method for optimization with hard constraints.
\newblock In {\em International Conference on Learning Representations (ICLR
  2021)}, pages 1--17, 2021.

\bibitem{Brosowsky-etal-21}
M.~Brosowsky, F.~Keck, O.~Dunkel, and M.~Zollner.
\newblock Sample-specific output constraints for neural networks.
\newblock In {\em The Thirty-Fifth AAAI Conference on Artificial Intelligence
  (AAAI-21)}, pages 6812--6821, 2021.

\bibitem{Amos-Kolter-17}
B.~Amos and J.Z. Kolter.
\newblock Optnet: Differentiable optimization as a layer in neural networks.
\newblock In {\em International Conference on Machine Learning}, pages
  136--145. PMLR, 2017.

\bibitem{Agrawal-etal-19}
A.~Agrawal, B.~Amos, S.~Barratt, S.~Boyd, S.~Diamond, and J.Z. Kolter.
\newblock Differentiable convex optimization layers.
\newblock {\em Advances in neural information processing systems}, 32:1--13,
  2019.

\bibitem{MeiyiLi-etal-23}
Meiyi Li, S.~Kolouri, and J.~Mohammadi.
\newblock Learning to solve optimization problems with hard linear constraints.
\newblock {\em IEEE Access}, 11:59995--60004, 2023.

\bibitem{Balestriero-LeCun-23}
Randall Balestriero and Yann LeCun.
\newblock Police: Provably optimal linear constraint enforcement for deep
  neural networks.
\newblock In {\em IEEE International Conference on Acoustics, Speech and Signal
  Processing (ICASSP)}, pages 1--5. IEEE, 2023.

\bibitem{Chen-Huang-Zhang-etal-21}
Yuntian Chen, Dou Huang, Dongxiao Zhang, Junsheng Zeng, Nanzhe Wang, Haoran
  Zhang, and Jinyue Yan.
\newblock Theory-guided hard constraint projection ({HCP}): A knowledge-based
  data-driven scientific machine learning method.
\newblock {\em Journal of Computational Physics}, 445(110624), 2021.

\bibitem{Detassis-etal-21}
F.~Detassis, M.~Lombardi, and M.~Milano.
\newblock Teaching the old dog new tricks: Supervised learning with
  constraints.
\newblock arXiv:2002.10766v2, Feb 2021.

\bibitem{Hendriks-etal-20}
J.~Hendriks, C.~Jidling, A.~Wills, and T.~Schon.
\newblock Linearly constrained neural networks.
\newblock arXiv:2002.01600, Feb 2020.

\bibitem{Negiar-etal-23}
G.~Negiar, M.W. Mahoney, and A.~Krishnapriyan.
\newblock Learning differentiable solvers for systems with hard constraints.
\newblock In {\em The Eleventh International Conference on Learning
  Representations (ICLR 2023)}, pages 1--19, 2023.

\bibitem{Tejaswi-Lee-22}
K.C. Tejaswi and Taeyoung Lee.
\newblock Iterative supervised learning for regression with constraints.
\newblock arXiv:2201.06529, Jan 2022.

\bibitem{Kotary-etal-21}
J.~Kotary, F.~Fioretto, and P.~Van Hentenryck.
\newblock Learning hard optimization problems: A data generation perspective.
\newblock In {\em 35th Conference on Neural Information Processing Systems
  (NeurIPS 2021)}, volume~34, pages 24981--24992., 2021.

\bibitem{Kotary-etal-21a}
J.~Kotary, F.~Fioretto, P.~Van Hentenryck, and B.~Wilder.
\newblock End-to-end constrained optimization learning: A survey.
\newblock In {\em Proceedings of the Thirtieth International Joint Conference
  on Artificial Intelligence (IJCAI-21)}, pages 4475--4482, 2021.

\bibitem{Stellato-etal-20}
B.~Stellato, G.~Banjac, P.~Goulart, A.~Bemporad, and S.~Boyd.
\newblock Osqp: An operator splitting solver for quadratic programs.
\newblock {\em Mathematical Programming Computation}, 12(4):637--672, 2020.

\bibitem{Rosenbrock-1960}
H.H. Rosenbrock.
\newblock An automatic method for finding the greatest or least value of a
  function.
\newblock {\em The Computer Journal}, 3(3):175--184, 1960.

\bibitem{Mishra-06}
S.K. Mishra.
\newblock Some new test functions for global optimization and performance of
  repulsive particle swarm method.
\newblock {\em Available at SSRN 926132}, pages 1--24, 2006.

\bibitem{Kollmannsberger-etal-2021}
S.~Kollmannsberger, D.~D'Angella, M.~Jokeit, and L.~Herrmann.
\newblock Physics-informed neural networks.
\newblock In {\em Deep Learning in Computational Mechanics: An Introductory
  Course}, pages 55--84. Springer, Cham, 2021.

\bibitem{Raissi-etal-19}
M.~Raissi, P.~Perdikaris, and G.E. Karniadakis.
\newblock Physics-informed neural networks: A deep learning framework for
  solving forward and inverse problems involving nonlinear partial differential
  equations.
\newblock {\em Journal of Computational Physics}, 378:686--707, 2019.

\end{thebibliography}

\end{document}